\DeclareRobustCommand{\KL}[2]{\ensuremath{\mathrm{KL}\left(#1\;\|\;#2\right)}}
\DeclareRobustCommand{\klpq}[2]{\ensuremath{\mathrm{kl}\left(#1\;\|\;#2\right)}}
\DeclareRobustCommand{\Ep}[2]{\ensuremath{\mathds{E}_{#1}\left[#2\right]}}
\DeclareRobustCommand{\E}[1]{\ensuremath{\mathds{E}\left[#1\right]}}
\newcommand{\veps}{\varepsilon}
\newcommand{\deq}{\triangleq}  %
\DeclareMathOperator*{\argmax}{arg\,max}
\newcommand{\mcA}{\mathcal{A}}
\newcommand{\mcB}{\mathcal{B}}
\newcommand{\mcD}{\mathcal{D}}
\newcommand{\mcE}{\mathcal{E}}
\newcommand{\mcH}{\mathcal{H}}
\newcommand{\mcP}{\mathcal{P}}
\newcommand{\mcS}{\mathcal{S}}
\newcommand{\mcX}{\mathcal{X}}
\newcommand{\mcY}{\mathcal{Y}}
\newcommand{\mdE}{\mathds{E}}
\newcommand{\mdP}{\mathds{P}}
\newcommand{\mdR}{\mathds{R}}
\tikzstyle{latent} = [circle,fill=white,draw=black,inner sep=1pt,
\tikzstyle{obs} = [latent,fill=gray!25]
\tikzstyle{const} = [rectangle, inner sep=0pt, node distance=1]
\tikzstyle{factor} = [rectangle, fill=black,minimum size=5pt, inner
\tikzstyle{det} = [latent, diamond]
\tikzstyle{plate} = [draw, rectangle, rounded corners, fit=#1]
\tikzstyle{wrap} = [inner sep=0pt, fit=#1]
\tikzstyle{gate} = [draw, rectangle, dashed, fit=#1]
\tikzstyle{caption} = [font=\footnotesize, node distance=0] %
\tikzstyle{plate caption} = [caption, node distance=0, inner sep=0pt,
\tikzstyle{factor caption} = [caption] %
\tikzstyle{every label} += [caption] %
\definecolor{hexcolor0xbfbfbf}{rgb}{0.749,0.749,0.749}
\tikzset{>=latex}
\tikzstyle{none}   = [inner sep=0pt]
\tikzstyle{line}   = [ -, thick, shorten <=1pt, shorten >=1pt ]
\tikzstyle{arrow}  = [ ->, thick, shorten <=1pt, shorten >=1pt ]
\tikzstyle{ardash} = [ dashed, ->, thick, shorten <=1pt, shorten >=1pt ]
\tikzstyle{box} = [rectangle, minimum width=1.5cm, minimum height=1.5cm,text centered, draw=black, inner sep=7pt]
\tikzstyle{neuron} = [circle, minimum width=4mm, very thick, draw=blue!80!black]
\tikzstyle{empty}=[circle,opacity=0.0,text opacity=1.0,inner sep=0pt]
\tikzstyle{box}=[rectangle,fill=White,draw=Black]
\tikzstyle{filled}=[circle,thick,fill=hexcolor0xbfbfbf,draw=Black]
\tikzstyle{hollow}=[circle,thick,fill=White,draw=Black]
\tikzstyle{param}=[rectangle,fill=Black,draw=Black,inner sep=0pt,minimum width=4pt,minimum height=4pt]
\tikzstyle{paramhollow}=[rectangle,thick,fill=White,draw=Black,inner sep=0pt,minimum
\pgfplotsset{compat=newest}
\pgfplotsset{plot coordinates/math parser=false}
\newlength\figureheight
\newlength\figurewidth
\newlength\figureheightsmall
\newlength\figurewidthsmall
\definecolor{POSTcolor}{rgb}{0.48, 0.20, 0.58} %
\definecolor{Qcolor}{rgb}{0.00, 0.53, 0.22} %
\tikzset{
  prefix after node/.style={
    prefix after command={\pgfextra{#1}}
  },
  /semifill/ang/.store in=\semi@ang,
  /semifill/ang=0,
  semifill/.style={
    circle, draw,
    prefix after node={
      \typeout{aaa \semi@ang}
      \let\nodename\tikz@last@fig@name
      \fill[/semifill/.cd, /semifill/.search also={/tikz}, #1]
        let \p1 = (\nodename.north), \p2 = (\nodename.center) in
        let \n1 = {\y1 - \y2} in
        (\nodename.\semi@ang) arc [radius=\n1, start angle=\semi@ang, delta angle=180];
    },
  }
}
\newcommand{\St}{\mathcal{S}}
\newcommand{\Ac}{\mathcal{A}}
\DeclareRobustCommand{\kl}{\mathrm{kl}}
\DeclareRobustCommand{\invklup}{\mathrm{kl}^{-1,+}}
\DeclareRobustCommand{\invkllow}{\mathrm{kl}^{-1,-}}
\newtheorem{theorem}{Theorem}[section]
\newtheorem{lemma}{Lemma}[section]
\title{Deep Actor-Critics with Tight Risk Certificates}
\author{\name Bahareh Tasdighi \email tasdighi@imada.sdu.dk \\
      \addr Department of Mathematics and Computer Science\\
      University of Southern Denmark
      \tmlrAND
      \name Manuel Haußmann \email haussmann@imada.sdu.dk \\
      \addr Department of Mathematics and Computer Science\\
      University of Southern Denmark
      \tmlrAND
      \name Yi-Shan Wu \email yswu@imada.sdu.dk \\
      \addr Department of Mathematics and Computer Science\\
      University of Southern Denmark
      \tmlrAND
      \name Andres R.\ Masegosa \email arma@cs.aau.dk \\
      \addr Department of Computer Science\\
      Aalborg University
      \tmlrAND
      \name Melih Kandemir \email kandemir@imada.sdu.dk \\
      \addr Department of Mathematics and Computer Science\\
      University of Southern Denmark
     }
\begin{document}

\maketitle

\begin{abstract}
Deep actor-critic algorithms have reached a level where they influence everyday life. They are a driving force behind continual improvement of large language models through user feedback. However, their deployment in physical systems is not yet widely adopted, mainly because no validation scheme fully quantifies their risk of malfunction. 
We demonstrate that it is possible to develop tight risk certificates for deep actor-critic algorithms that predict generalization performance from validation-time observations. 
Our key insight centers on the effectiveness of minimal evaluation data. A small feasible set of evaluation roll-outs collected from a pretrained policy suffices to produce accurate risk certificates when combined with a simple adaptation of PAC-Bayes theory.
Specifically, we adopt a recently introduced recursive PAC-Bayes approach, which splits validation data into portions and recursively builds PAC-Bayes bounds on the excess loss of each portion's predictor, using the predictor from the previous portion as a data-informed prior. 
Our empirical results across multiple locomotion tasks, actor-critic methods, and policy expertise levels demonstrate risk certificates tight enough to be considered for practical use. 
\end{abstract}

\section{Introduction}\label{sec:intro}

Reinforcement learning (RL) is transforming emerging AI technologies. Large language models incorporate human feedback via RL, thereby continually improving accuracy \citep{Christiano2017,Ziegler2019,deepseekai2025deepseekv3technicalreport}. Generative AI is increasingly integrated into agentic workflows to automate complex decision-making tasks. RL has shown great promise in controlling physical robotic systems. Recent deep actor-critic algorithms learned to make a legged robot walk after only 20 minutes of outdoor training in an online mode \citep{Kostrikov2023Demonstrating}. Model-based extensions of actor-critic pipelines can also achieve sample-efficient visual-control tasks in diverse settings \citep{Hafner_2025,zhang2023storm}. Despite promising results observed in experimental conditions, RL is used far less than classical approaches in physical robot control. This opportunity has largely been missed because deep RL algorithms are overly sensitive to initial conditions and can change behavior drastically during training. Embodied intelligent systems pose a high risk of causing harm when generalization performance differs significantly from observed validation performance. Predictable generalization performance is even more critical when these systems update their behavior based on interactions with humans.

There has been an effort to use learning-theoretic approaches to train high-capacity predictors with risk certificates, i.e., bounds that guarantee a predictor’s generalization performance. Typically, this performance is estimated from observed validation results, which may be misleading. \emph{Probably Approximately Correct Bayesian (PAC-Bayes) theory} \citep{mcallester1999pac,alquier2024user} provides risk certificates for stochastic predictors relative to a prior distribution over the hypothesis space. In this framework, the computationally prohibitive capacity term is reduced to a Kullback-Leibler divergence between the posterior and the prior, enabling the incorporation of domain knowledge into the analysis. Since stochastic policies are often considered, relying on PAC-Bayes is a natural choice.

\begin{figure*}
    \centering
    \adjustbox{max width=0.98\textwidth}{
    \begin{tikzpicture}[
        general node/.style={draw=purple!40!black,fill=purple!10!white, thick,rounded corners,align=center, inner sep=3pt, font=\footnotesize\sffamily, minimum height=4em},
        arrow/.style={-{Stealth[scale=1.2]}, thick, draw=gray!90},
        node distance=2.5em
    ]
    \node[general node] (ac) {Agent training\\ \tiny{e.g., REDQ, SAC, PPO}};
    \node[general node, right=3.5em of ac] (policy) {Data collection\\\tiny{interact with the environment}\\[-0.5em]\tiny{given the policy}};
    
    \node[general node, right=3.5em of policy] (evaluation) {Distribution fitting\\\tiny{fit posteriors}\\[-0.5em]\tiny{on data splits}};
    
    \node[general node, right=4.5em of evaluation] (computation) {Risk certificate creation\\\tiny{compute the recursive}\\[-0.5em]\tiny{PAC-Bayes bound}};

    \draw[arrow] (ac) edge[font=\tiny,align=center,above]["policy"] (policy);
    
    \draw[arrow] (policy) edge[align=center,font=\tiny]["data"] (evaluation);
    
    \draw[arrow] (evaluation) edge[align=center,font=\tiny] ["parameters"] (computation);
    \end{tikzpicture}}
    \caption{The four-step procedure to generate tight risk certificates for deep actor-critic algorithms.}
    \label{fig:workflow}
\end{figure*}

PAC-Bayes is the first and remains the most promising method for providing meaningful risk certificates for deep neural networks \citep{dziugaite2017computing,perez2021tighter,lotfi22pacbayes}.
Further studies have improved the tightness (i.e., precision) of these certificates through the following techniques: (i) pretraining probabilistic neural networks on held-out data and using them as \emph{data-informed priors} \citep{ambroladze2006tighter,dziugaite21ontherolw}; (ii) using pretrained networks as first-step predictors and developing PAC-Bayes guarantees on the residual of their predictions, termed the \emph{excess loss}; and (iii) recursively repeating the first two steps on multiple data splits, a recent method known as \emph{Recursive PAC-Bayes} \citep{wu2024recursive}.
The scope of these developments has thus far been limited to simple classification tasks with feedforward neural networks. Their application to deep actor–critic algorithms remains open, primarily because mainstream PAC-Bayes bounds assume i.i.d.\ datasets, whereas RL assumes a controlled Markov chain. 

We present a simple recipe for providing risk certificates for deep, model-free actor–critic architectures. We find that, contrary to expectations, the three modern PAC-Bayesian learning techniques mentioned above can successfully handle the high variance of Monte Carlo samples collected by running a pretrained policy network for multiple episodes in evaluation mode. Our approach proposes self-certified training of probabilistic neural networks on different splits of an i.i.d.\ dataset containing return realizations of the policy, computed by first-visit Monte Carlo and post-processed through a simple thinning approach. 
We recursively build a PAC-Bayes bound on the excess losses of these networks, deriving a new adaptation of \citet{wu2024recursive}. \cref{fig:workflow} illustrates our risk-certificate generation workflow.  
We evaluate the approach for several classical actor–critic agents, PPO~\citep{schulman2017proximal}, SAC~\citep{haarnoja2018soft}, and REDQ~\citep{chen2021randomized}, across three benchmark suites.
Our results indicate that the risk certificates become tighter as the recursion depth increases. The final bounds are sufficiently tight for practical use. Furthermore, the tightness of the risk certificates is proportional to the policy’s expertise.

\section{Background}\label{sec:background}

\subsection{The State of the Art of Model-free Deep Actor-Critic Learning}

Consider a set of states $\St$ an agent may be in and an action space $\Ac$ from which the agent can choose actions to interact with its environment. Denote by $\Delta(\St)$ and $\Delta(\Ac)$ the sets of probability distributions defined on $\St$ and $\Ac$, respectively. We define a Markov Decision Process (MDP) \citep{puterman2014markov} as the tuple ${M=\langle\St, \Ac, r, P, P_0, \gamma\rangle}$, where $r: \St \times \Ac \rightarrow [0,R]$ is a bounded reward function, ${P: \St \times \Ac  \times \St \rightarrow[0,1]}$ is the state-transition kernel conditioned on a state-action pair;  specifically $P(s'|s,a)$ is the probability distribution of the next state $s'\in\St$ given the current state-action pair $(s,a) \in \St \times \Ac$. We denote the initial-state distribution by $P_0 \in \Delta(\St)$, the discount factor by $\gamma \in (0,1)$, and let ${\pi: \mcS\times\mcA \to [0,1]}$ be a policy. The goal of RL is to learn a policy that maximizes the expected discounted return,  $\pi_* := \argmax_{\pi \in \Pi} \Ep{\tau_{\pi}}{\sum_{t=0}^\infty \gamma^t r(s_t,a_t) }$. The expectation is taken with respect to the trajectory ${\tau_{\pi} := (s_0, a_0, s_1, a_1, s_2, a_2, \ldots )}$ of states and actions generated when a policy $\pi$ chosen from a feasible set $\Pi$ is executed. We refer to $\pi_*$ as the optimal policy. The exact Bellman operator for a policy $\pi$  is defined as 
\begin{equation}
    T_\pi Q(s,a) := r(s,a) + \gamma\Ep{s' \sim P(\cdot|s,a)}{Q(s',\pi(s'))} \label{eq:bellman_operator}
\end{equation}
for some function $Q: \St \times \Ac \rightarrow \mdR$. 
The unique fixed point of this operator is the true action-value function $Q_\pi$, which maps a state-action pair $(s,a)$ to the expected discounted sum of rewards the policy $\pi$ collects when executed from $(s,a)$. In other words, the equality ${T_\pi Q(s,a) = Q(s,a)}$ holds if and only if ${Q(s,a)=Q_\pi(s,a), \forall (s,a)}$. Any other $Q$ incurs an error $(T_\pi Q(s,a) - Q(s,a))^2$, called the {\it Bellman error}. Common deep actor-critic methods approximate the true action-value function $Q_\pi$ by one-step Temporal Difference (TD) learning that minimizes ${L(Q,\pi) := \mathbb{E}_{s \sim P_\pi} [ (T_\pi Q(s,a) - Q(s,a))^2]}$ with respect to $Q$, given a data set $\mathcal{D}$ and
${P_\pi(s' \in A)\! =\! \mathbb{E}_{s \sim P_0} \left[\sum_{t > 0} P(s_t \in A | s_0 =s, \pi(s))\right ]}$
which is defined as the state-visitation distribution of policy $\pi$ for some event $A$ that belongs to the $\sigma$-algebra of the transition probability distribution. Because the transition probabilities are unknown, the expectation term in \eqref{eq:bellman_operator} cannot be computed. Instead, the observed transitions are used to approximate it with a  single-sample Monte Carlo estimate, yielding the training objective below:
\begin{align*}
    \widetilde{L}(Q) := 
    \mdE_{s\sim P_\pi}\Big[
    \Ep{s'\sim P(\cdot |s,\pi(s))}{(r(s,a) + \gamma Q(s',\pi(s')) -Q(s,a))^2}\!\Big].
\end{align*}
A deep actor-critic algorithm fits a neural-network function approximator $Q$, referred to as the critic, to a set of observed tuples ${(s,a,s')}$ stored in a replay buffer $\mathcal{D}$ by minimizing an empirical estimate of the stochastic loss: $\widehat{L}_{\mathcal{D}}(Q) := 1/|\mathcal{D}| \sum_{(s,a,s') \in \mathcal{D}} (\widetilde{T}_\pi Q(s,a,s') -Q(s,a))^2$. The critic is subsequently used to train a policy network, or actor, $\pi' \gets \arg \max_\pi \mathbb{E}_{s \sim P_\pi} [Q(s,\pi(s))]$. It is common practice to adopt the \emph{Maximum-Entropy Reinforcement Learning}  approach \citep{haarnoja2018soft, haarnoja2018soft-app} to balance exploration and exploitation, ensuring effective training. The approach supplements the reward function $r(s,a)$ with a policy-entropy term $\mathbb{H}[\pi(\cdot|s)]$, scaled by a hyperparameter $\alpha \geq 0$, tuned jointly with the actor and critic, i.e., ${r_\text{MaxEnt}\deq r(s,a) + \alpha \mathbb{H}[\pi(\cdot|s)]}$. 

Performing off-policy TD learning with deep neural nets is notoriously unstable which is often attributed to the \emph{deadly triad} \citep{sutton2018reinforcement}. The main source of instability is the accumulation of errors from approximating $T_\pi Q$ by its Monte Carlo estimate. Strategies to improve stability include maintaining Polyak-updated target networks \citep{lillicrap2015continuous} and learning twin critics while using the minimum of their target-network outputs in Bellman target calculation \citep{fujimoto2018addressing}. Empirically, training an ensemble of critic networks in a maximum-entropy setup largely mitigates these stability issues. 
We adopt PPO~\citep{schulman2017proximal}, SAC~\citep{haarnoja2018soft}, and REDQ~\citep{chen2021randomized}, three popular actor-critic methods for model-free continuous control, as our representative approaches.
This choice is pragmatic rather than restrictive allowing us to trade the computational cost of a broader exploration of algorithms for a deeper, more comprehensive empirical evaluation of a single one. 

\subsection{Developing Risk Certificates with PAC-Bayes Bounds}
PAC-Bayes \citep{mcallester1999pac,alquier2024user} offers a powerful framework for understanding and controlling the generalization performance of learning algorithms by integrating prior beliefs with information obtained from data.
\emph{PAC-Bayesian learning} employs modern machine learning techniques to model $\rho$ with complex function approximators and fit them to data. 
It has been successfully applied to both image classification \citep{dziugaite2017computing, wu2024recursive} and regression tasks \citep{reeb2018learning}. 
Its application to reinforcement learning has thus far been limited to the design of critic training losses, without rigorous quantification of the tightness of the performance guarantees \citep{tasdighi2023pac,tasdighi2024deep}.

\paragraph{Notation.}\label{par:notation}
Let $\mathcal{H}: \mathcal{X} \rightarrow \mathcal{Y}$ be a set of feasible hypotheses, and let ${\ell: \mathcal{Y} \times \mathcal{Y} \rightarrow [0,1]}$ be a bounded loss function.\footnote{Our discussion generalizes directly to any bounded loss within an interval $[a,b]$ with $a,b \in \mdR$.} 
Further, let ${L(h) = \Ep{(x,y) \sim P_D}{\ell(h(x), y)}}$ denote the expected error, where $P_D$ is a distribution on $\mcX\times \mcY$. The empirical loss is ${\hat{L}(h) = \frac{1}{N} \sum_{i=1}^N \ell(h(x_i), y_i)}$ for a data set ${\mcD = \{(x_n,y_n):n\in\{1,\ldots, N\}\}}$ of size $N$ with $(x_n,y_n)\sim P_D$. $\mcP$ denotes the set of distributions on $\mcH$.
For two distributions $\rho, \rho_0$ on $\mcH$, the Kullback–Leibler (KL) divergence is defined as ${\KL{\rho}{\rho_0} \deq \Ep{h\sim \rho}{\log \rho(h) - \log \rho_0(h)}}$. We use ${\klpq{p}{q} \deq p\log (p/q) + (1 - p)\log ((1 - p)/(1-q))}$ to denote the KL divergence between two Bernoulli distributions.
We define the upper inverse of $\klpq{\cdot}{\cdot}$ as ${\invklup(\hat p,\veps)\deq \max\{p: p \in [0,1] \mid \klpq{\hat p}{p} \leq \veps\}}$ and the lower one as ${\invkllow(\hat p, \veps) \deq \min\{p: p\in[0,1] \mid \klpq{\hat p}{p}\leq \veps\}}$.
PAC-Bayesian analysis \citep{mcallester1999pac,shawe1997pac} develops bounds on the \emph{expected loss} $\Ep{h\sim \rho}{L(h)}$ under a posterior distribution $\rho$ with respect to a prior distribution $\rho_0$ that hold with high probability. That is, they provide \emph{risk certificates} for the generalization error.
For brevity, we use $\Ep{\rho}{\cdot} = \Ep{h\sim \rho}{\cdot}$ throughout this paper. 
In the context of PAC-Bayes, the terms \emph{posterior} and \emph{prior} refer to distributions dependent on and independent of the validation data, respectively. They are not to be interpreted in a Bayesian sense as being linked by a likelihood.\footnote{See \citet{Germain2016} for results linking PAC-Bayes and Bayesian inference.}
The choice of bounds that yields the tightest risk certificates depends on the specific use case; see, e.g., \citet{alquier2024user} for a recent introduction and a survey of various PAC-Bayesian bounds. 
Here, we rely on bounds derived from the $\kl$ divergence, as they are tighter than alternatives when no additional information about the data distribution is available, while noting that the same arguments apply to any other PAC-Bayesian bound.

\subsubsection{PAC-Bayes-Split-$\kl$ Bound}
\citet{wu2022split} introduce a PAC-Bayes bound for random variables that take values in intervals $[a,b]$ by splitting them into components that individually satisfy the constraints of a $\kl$-inequality 
(see \cref{lem:split-kl}).

Define ${\tilde \ell: \mcY\times \mcY \to [a,b]}$, where $a,b \in \mdR$. For $\mu \in [a,b]$, define $\tilde \ell^+ = \max\{0,\tilde\ell - \mu\}$ and ${\tilde \ell^- = \max\{0,\mu - \tilde \ell\}}$. 
${\tilde L^+(h) = \mdE_{(x,y)\sim P_D}[\tilde \ell^+(h(x),y)]}$ and ${\hat{\tilde L}^+(h) = \frac1N\sum_{n=1}^N\tilde \ell^+(h(x_n),y_n)}$ are the expected and empirical losses. $L^-$ and $\hat{\tilde L}^-$ are defined analogously. We cite the following bound.
\begin{theorem}[PAC-Bayes-Split-$\kl$ inequality \citep{wu2022split}]\label{thm:pacbayes-split-kl}
Let $\tilde \ell$ and the remaining loss terms be defined as above. Then, for any $\rho_0$ on $\mcH$ independent of $\mcD$, any $\mu\in [a,b]$, and any $\delta \in (0,1)$,
\begin{align*}
    &\exists \rho \in \mcP: \mdE_{\rho}[\tilde L(h)] \geq \mu \\
    &\vspace{1em}+\!(b \!-\! \mu)\invklup\biggl(\!\frac{\mdE_{\rho}[\hat{\tilde L}^+(h)]}{b-\mu},\!\frac{\mathrm{KL}(\rho\!\mid\mid\!\rho_0)\!+\!\ln(4\sqrt{N}/\delta)}{N}\!\biggr)\\
    &\vspace{1em}-\!(\mu \!-\! a)\invkllow\biggl(\!\frac{\mdE_{\rho}[\hat{\tilde L}^-(h)]}{\mu - a},\!\frac{\mathrm{KL}(\rho\!\mid\mid\!\rho_0)\!+\!\ln(4\sqrt{N}/\delta)}{N}\!\biggr),
\end{align*}
with probability at most $\delta$.
\end{theorem}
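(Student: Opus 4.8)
The plan is to exploit the elementary decomposition that gives the bound its name. For any $\tilde\ell\in[a,b]$ and any split point $\mu\in[a,b]$, the definitions $\tilde\ell^+=\max\{0,\tilde\ell-\mu\}$ and $\tilde\ell^-=\max\{0,\mu-\tilde\ell\}$ yield the pointwise identity $\tilde\ell=\mu+\tilde\ell^+-\tilde\ell^-$, which one verifies by treating the cases $\tilde\ell\ge\mu$ and $\tilde\ell<\mu$ separately. Taking expectations first over $(x,y)\sim P_D$ and then over $h\sim\rho$ gives $\mdE_\rho[\tilde L(h)]=\mu+\mdE_\rho[\tilde L^+(h)]-\mdE_\rho[\tilde L^-(h)]$. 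The decisive feature is that $\tilde\ell^+$ takes values in $[0,b-\mu]$ and $\tilde\ell^-$ in $[0,\mu-a]$, so after rescaling by $1/(b-\mu)$ and $1/(\mu-a)$ respectively, both become $[0,1]$-valued losses to which the ordinary $\kl$-based PAC-Bayes inequality of \cref{lem:split-kl} applies verbatim.

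First I would apply that $\kl$-inequality to the rescaled positive part $\tilde\ell^+/(b-\mu)$ with confidence parameter $\delta/2$. Inverting the $\kl$ in the increasing direction gives, uniformly over all $\rho$, the upper estimate $\mdE_\rho[\tilde L^+(h)]/(b-\mu)\le\invklup(\mdE_\rho[\hat{\tilde L}^+(h)]/(b-\mu),\veps)$ with $\veps=(\KL{\rho}{\rho_0}+\ln(4\sqrt N/\delta))/N$. Next I would apply the same inequality to the rescaled negative part $\tilde\ell^-/(\mu-a)$, again with confidence $\delta/2$, but now invert in the decreasing direction to obtain the lower estimate $\mdE_\rho[\tilde L^-(h)]/(\mu-a)\ge\invkllow(\mdE_\rho[\hat{\tilde L}^-(h)]/(\mu-a),\veps)$. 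Splitting the total failure probability as $\delta/2+\delta/2$ is exactly what produces the constant $4\sqrt N/\delta$ inside the logarithm, since each application contributes $\ln(2\sqrt N/(\delta/2))$, and a union bound guarantees that both one-sided statements hold simultaneously with probability at least $1-\delta$.

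To finish, I would substitute these two estimates into the decomposition. Because $\tilde L^-$ enters with a negative sign, the lower bound on $\mdE_\rho[\tilde L^-(h)]$ is precisely what is needed to push $\mdE_\rho[\tilde L(h)]$ up to its certified value; combining the upper bound on the positive part with the lower bound on the negative part and multiplying through by $b-\mu$ and $\mu-a$ yields $\mdE_\rho[\tilde L(h)]\le\mu+(b-\mu)\invklup(\cdots)-(\mu-a)\invkllow(\cdots)$ for every $\rho\in\mcP$ on the joint high-probability event. Negating this ``for all $\rho$'' statement gives exactly the ``$\exists\,\rho$ with probability at most $\delta$'' phrasing of the claim.

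The step I expect to be the main obstacle is conceptual rather than computational: bookkeeping the two inversion directions together with the signs of the two terms, so that both one-sided bounds point the correct way to combine into a single upper certificate. A sign slip here would silently turn the bound into a vacuous or invalid one. A secondary point requiring care is that \cref{lem:split-kl} must hold uniformly over all posteriors simultaneously, so that a data-dependent $\rho$ remains admissible after the split; one should also confirm that the degenerate cases $\mu=a$ and $\mu=b$ are handled by convention, so that the rescalings $1/(\mu-a)$ and $1/(b-\mu)$ and the vanishing of the corresponding term are well defined.
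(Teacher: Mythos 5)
Your proof is correct and follows essentially the same route as the paper (and as \citet{wu2022split}): decompose $\tilde\ell=\mu+\tilde\ell^+-\tilde\ell^-$, apply the PAC-Bayes-$\kl$ bound to each rescaled component at confidence $\delta/2$, invert upward for the positive part and downward for the negatively-signed part, and union bound, which correctly yields the $\ln(4\sqrt N/\delta)$ constant. One small label slip: the ``ordinary $\kl$-based PAC-Bayes inequality'' you invoke is \cref{thm:pac_bayes_kl}, not \cref{lem:split-kl} (the latter is the non-PAC-Bayes, single-random-variable version), though your subsequent use of $\veps=(\KL{\rho}{\rho_0}+\ln(4\sqrt N/\delta))/N$ makes clear you mean the right result.
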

\begin{proof}
    The theorem follows by applying 
    \cref{lem:split-kl} 
    to the decomposition ${\mdE_{\rho}[\tilde L(h)] = \mu + \mdE_{\rho}[\tilde L^+(h)] - \mdE_{\rho}[\tilde L^-(h)]}$.
\end{proof}

\subsubsection{Recursive PAC-Bayes Bound}\label{sec:rpb}

\paragraph{Data-informed Prior.}
The tightness of PAC-Bayesian bounds is governed by the KL divergence between the posterior $\rho$ and the prior $\rho_0$. The better the prior guess, the tighter the bound.
Because the prior must be independent of the observed data, a common choice is to select a prior as uniform as possible over the hypothesis space.
To improve upon this naive choice, \citet{ambroladze2006tighter} proposed splitting the observed data into two disjoint subsets, $S_0$ and $S_1$, i.e., $\mcD= S_0 \cup S_1$, using $S_0$ to infer a \emph{data-informed prior} and $S_1$ to subsequently evaluate the bound. This balances the benefit of a better prior with the cost of having fewer observations to evaluate the bound.

\paragraph{Excess Loss.}
The \emph{excess loss} $L^\text{exc}(h)$ with respect to a reference hypothesis $h^*\in \mcH$ is defined as $L^\text{exc}(h) = L(h) - L(h^*)$.
The excess-loss concept allows the expected loss to be decomposed as
    ${\Ep{\rho}{L(h)} = \Ep{\rho}{L(h) - L(h^*)} + L(h^*)}$.
Using $S_0$ to construct both the prior $\rho_0$ and the reference $h^*$, \citet{mhammedi2019pac} showed that, assuming $L(h^*)$ is close to $L(h)$, the excess loss has lower variance and thus yields a more efficient bound, while a bound on $L(h^*)$ is independent of $\KL{\rho}{\rho_0}$ and can be obtained using standard generalization guarantees.

\paragraph{Recursive PAC-Bayes.}
\citet{wu2024recursive} generalized the excess loss further by introducing a scaling factor $\kappa < 1$ to maintain a diminishing effect of recursions:
${\Ep{\rho}{L(h)} = \Ep{\rho}{L(h) - \kappa \Ep{\rho_0}{L(h^*)}} + \kappa \Ep{\rho_0}{L(h^*)}}$. Here, the first term reflects the excess loss with respect to a scaled version of the expected reference hypothesis loss under the prior $\rho_0$. The second term, in turn, is an expected loss similar to the one on the left-hand side of the equation. Instead of adhering to a binary split $\mcD = S_0\cup S_1$ such that $S_0\cap S_1=\emptyset$, they proposed extending this decomposition recursively by partitioning $\mcD$ into $T$ disjoint subsets, $\mcD = \bigcup_{t=1}^T S_t$, and they define $S_{\leq t} = \bigcup_{s=1}^t S_s$ and $S_{\geq t}=\bigcup_{s=t}^T S_s$.
Their recursion is given by 
\begin{align}\label{eq:rpb}
\begin{split}
\Ep{\rho_t}{L(h)} &= \Ep{\rho_t}{L(h) - \kappa_t\Ep{\rho_{t-1}}{L(h)}} + \kappa_t\Ep{\rho_{t-1}}{L(h)},
\end{split}
\end{align}
for $t \geq 2$, and $\kappa_1,\ldots,\kappa_T$ are scaling factors.
The distributions $\rho_1,\ldots, \rho_{T} \in \mcH$ form a sequence such that $\rho_t$ depends solely on $S_{\leq t}$. %

While \citet{wu2024recursive} formulated their final recursive bound directly for a zero-one loss and PAC-Bayes split-kl bounds \citep{wu2022split}, 
we present their result first in a general, loss-agnostic form before constructing a specific bound in the next section.

\begin{theorem}{(\emph{Recursive PAC-Bayes bound.})}\label{thm:rpb}
Let $\mcD = S_1\cup \cdots \cup S_T$ be a disjoint decomposition of the set of observations $\mcD$. Let $S_{\leq t}$ and $S_{\geq t}$ be as defined above, $N = |\mcD|$, and $N_t = |S_{\geq t}|$.
Let $\kappa_1,\ldots, \kappa_T$ be a sequence of scaling factors, where $\kappa_t$ is allowed to depend on $S_{\leq t-1}$.
Let $\mcP_t$ be the set of distributions on $\mcH$ that are allowed to depend on $S_{\leq t}$, and $\rho_t \in \mcP_t$.
Then, for any $\delta \in (0,1)$,
\begin{equation*}
    \mdP\left(\exists t \in [T], \rho_t \in \mcP_t\!\text{ such that }\!\Ep{\rho_t}{L(h)} \geq \mcB_t(\rho_t)\big.\right) \leq \delta,
\end{equation*}
where $\mcB_t(\rho_t)$ is a generic PAC-Bayesian bound on $\Ep{\rho_t}{L(h)}$ defined recursively as follows.
\begin{equation*}
    \mcB_t(\rho_t) = \mcE_t(\rho_t,\kappa_t) + \kappa_t \mcB_{t-1}(\rho^*_{t-1}),
\end{equation*}
where $\mcB_1(\rho_1)$ is a PAC-Bayes bound on $\Ep{\rho_1}{L(h)}$ with an uninformed prior, and $\mcE_t(\rho_t,\kappa_t)$ is a PAC-Bayes bound on the excess loss $\mdE_{\rho_t}\big[L(h) - \kappa_t\Ep{\rho^*_{t-1}}{L(h')}\big]$.
\end{theorem}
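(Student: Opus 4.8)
The plan is to convert the exact algebraic recursion \eqref{eq:rpb} into a chain of PAC-Bayes inequalities, control the total failure probability by a single union bound over the $T$ levels, and propagate the per-level guarantees by induction on $t$. The starting observation is the deterministic identity underlying the recursion: for every $t \ge 2$ and every $\rho_t \in \mcP_t$,
\begin{equation*}
\Ep{\rho_t}{L(h)} = \Ep{\rho_t}{L(h) - \kappa_t \Ep{\rho^*_{t-1}}{L(h')}} + \kappa_t \Ep{\rho^*_{t-1}}{L(h')},
\end{equation*}
which holds pointwise and splits the target quantity into an excess-loss term and a scaled copy of a lower-level expected loss. The first summand is exactly what $\mcE_t(\rho_t,\kappa_t)$ is built to bound, and the second is exactly the quantity bounded by $\mcB_{t-1}$; this is what makes the recursion close.

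Next I would set up the per-level failure events and allocate the confidence budget. At the base put $F_1 = \{\exists \rho_1 \in \mcP_1 : \Ep{\rho_1}{L(h)} > \mcB_1(\rho_1)\}$, and for $t \ge 2$ let $F_t$ be the event that the excess-loss bound $\mcE_t$ is violated for some $\rho_t \in \mcP_t$. Since each $\mcE_t$ (respectively $\mcB_1$) is by assumption a valid PAC-Bayes bound, choosing confidence levels $\delta_t$ with $\sum_{t=1}^T \delta_t = \delta$ gives $\mdP(F_t) \le \delta_t$, and a union bound yields $\mdP(\bigcup_t F_t) \le \delta$. The crucial point is that validity of $\mcE_t$ requires its prior/reference $\rho^*_{t-1}$ and the scaling $\kappa_t$ to be independent of the block $S_{\ge t}$ on which $\mcE_t$ is evaluated; this is exactly the staleness condition that $\rho^*_{t-1}$ and $\kappa_t$ depend only on $S_{\le t-1}$, which is disjoint from $S_{\ge t}$. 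Only the prior must be independent of the evaluation sample, while the posterior $\rho_t$ may legitimately depend on all of $S_{\le t}$ (including the part of $S_{\ge t}$ it touches).

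On the complementary event $\bigcap_t F_t^c$, of probability at least $1-\delta$, I would run the induction. The base case is immediate from $F_1^c$ instantiated at $\rho^*_1 \in \mcP_1$. For the inductive step, assuming $\Ep{\rho^*_{t-1}}{L(h')} \le \mcB_{t-1}(\rho^*_{t-1})$, I substitute the excess-loss bound supplied by $F_t^c$ and the inductive hypothesis into the identity above; since the scaling factors are non-negative, multiplying the inductive hypothesis by $\kappa_t$ preserves the inequality, giving $\Ep{\rho_t}{L(h)} \le \mcE_t(\rho_t,\kappa_t) + \kappa_t \mcB_{t-1}(\rho^*_{t-1}) = \mcB_t(\rho_t)$ for all $\rho_t \in \mcP_t$. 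Negating the resulting universally quantified statement recovers the claimed probability bound.

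The main obstacle I anticipate is not the induction or the union bound, which are mechanical, but the measurability and independence bookkeeping that legitimizes treating the data-dependent $\rho^*_{t-1}$ as a valid prior for the level-$t$ bound. Two things must be checked with care: that each PAC-Bayes inequality holds \emph{uniformly} over all posteriors in $\mcP_t$, so it can be instantiated at the data-dependent reference $\rho^*_{t-1} \in \mcP_{t-1}$ that the next level needs; and that conditioning on $S_{\le t-1}$ leaves the level-$t$ inequality valid with the same constant. Getting the data-split structure and the ``depends only on $S_{\le t-1}$'' conditions exactly right is where the subtlety lies; once they are in place, the assembly of the recursion is routine.
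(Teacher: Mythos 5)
Your proposal is correct and follows essentially the same route as the paper: the pointwise excess-loss decomposition, a union bound allocating the confidence budget across the $T$ levels (the paper simply takes $\delta/T$ per level), and induction on $t$ to assemble the recursion. The additional care you devote to the independence of $\rho^*_{t-1}$ and $\kappa_t$ from $S_{\geq t}$ is a welcome elaboration of what the paper leaves implicit in the assumption that $\mcE_t$ is a valid PAC-Bayes bound.
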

\begin{proof}
Because $\mcB_1(\rho_1)$ and $\mcE_t(\rho_t, \kappa_t)$ are PAC-Bayes bounds by assumption, we have 
\begin{align*}
&\mdP\left(\exists \rho_1 \in \mcP_1: \Ep{\rho_1}{L(h)}\geq \mcB_1(\rho_1)\right)\leq \delta/T,\\ 
\text{and} \quad &\mdP\big(\exists \rho_t \in \mcP_t: \mdE_{\rho_t}\big[L(h) - \kappa_t\Ep{\rho_{t-1}^*}{L(h')}\!\big] \geq \mcE_t(\rho_t,\kappa_t)\big) \leq \delta/T \text{ for } t \in \{2,\ldots,T\}.
\end{align*}
The claim follows by expected loss decomposition and the recursion.
\end{proof}

\section{Recursive PAC-Bayesian Risk Certificates for Reinforcement Learning}\label{sec:rpbrl}
The general recursive PAC-Bayes framework introduced in \cref{sec:rpb} provides a generic method for certifying the expected loss of stochastic predictors. To apply this framework to reinforcement learning, hypotheses, data, and losses are interpreted in terms of policies, evaluation roll-outs, and return prediction. 
Obtaining risk certificates involves four steps, following the conceptual structure in \cref{fig:workflow}.

\paragraph{\emph{(i) Agent Training.}}
An actor–critic algorithm, e.g., REDQ \citep{chen2021randomized}, is trained until convergence or until a computational budget is exhausted. The resulting policy $\pi$ is then fixed. The objective is not to optimize $\pi$ further, but to certify its performance. 

\paragraph{\emph{(ii) Data Collection.}}
Execute the frozen policy for $N_\text{ep}$ episodes in evaluation mode (no exploration noise, no parameter updates). 
For each episode, store the trajectory $\tau$ and use it to compute discounted returns $G_t = \sum_{i=t}^\infty \gamma^{i-t} r_i$ for each visited state $s_t$.  
The discounted return is used as the prediction target rather than an undiscounted sum of rewards. 
Short-term risks tend to be more relevant for decisions, as longer-term risks depend on an increasing set of external, often unaccountable, factors. Discounted rewards also serve as a proxy for lifelong learning and policy evaluation, as they generalize to non-episodic data. 
Although the original policy might be trained on discounted returns in step \emph{(i)}, a valid bound can also be constructed by computing undiscounted rewards from data collected in \emph{(ii)}. 
As consecutive samples are highly correlated, a simple thinning strategy, described in the appendix, is applied to better approximate the i.i.d.\ assumption required by PAC-Bayes and to form a data set $\mcD = \{(s_n, G_n)\}_{n=1}^N$.
Although a PAC-Bayesian bound tightens as the number of data points increases, it is observed that even a relatively small number of evaluation roll-outs is sufficient to obtain tight results. 

\paragraph{\emph{(iii) Fitting Posteriors via Recursive Training.}}
Partition the data into $T$ disjoint subsets ${\mcD = S_1\cup \cdots \cup S_T}$. 
Train a sequence of $T$ Bayesian neural networks (BNNs), where the $t$-th network is trained on $S_{\leq t}$ by minimizing the PAC-Bayes bound $\mcE_t(\rho_t,\kappa_t)$ from \cref{thm:rpb} to infer the posterior $\rho_t$ over the BNN parameters. $\rho_{t-1}$ serves as a data-informed prior for $t>1$, and an uninformative prior is used for $\rho_0$. $\kappa_t=0.5$ is used in this work.

\paragraph{\emph{(iv) Risk Certificate Construction.}}
Apply \cref{thm:rpb} to construct recursive bounds as follows. 

\paragraph{A bound for $\mcB_1$.}
As $\hat L(h)$ is bounded in $[0,B]$, its expectation is rescaled, and 
\begin{equation*}
    \mcB_1(\rho_1)\!=\! B\invklup\!\biggl(\!\frac{\mdE_{\rho}[\hat L(h)]}{B},\!\!\frac{\KL{\rho_1\!\!}{\!\!\rho_0^*}\!+ \!\ln(2T\!\sqrt{N}\!/\!\delta)}{N}\!\biggr)\!,
\end{equation*}
is chosen, where $\rho_0^*$ is a data-independent prior distribution on $\mcH$.
Given the result in 
\cref{thm:pac_bayes_kl}, 
this is a PAC-Bayesian bound on $\Ep{\rho_1}{L(h)}$, i.e., 
\begin{equation*}
    \mdP\bigl(\exists \rho_1 \in \mcP_1: \Ep{\rho_1}{L(h)} \geq \mcB_1(\rho_1)\bigr) \leq \delta/T.
\end{equation*}

\paragraph{A bound for $\mcE_t$.}
Let ${L^\text{exc}_t(h) = L(h) - \kappa_t\Ep{\rho_{t-1}}{L(h')} \in [-\kappa_t B,B]}$.
For $\mu \in [-\kappa_t B, B]$, define ${L^{\text{exc}+}_t(h) = \max\{0,L^\text{exc}_t(h) - \mu\}}$ and ${L^{\text{exc}-}_t (h) = \max\{0,\mu - L^\text{exc}_t(h) \}}$, with $\hat L^{\text{exc}+}_t(h)$ and $\hat L^{\text{exc}-}_t(h)$ as their empirical analogues. Set 
\begin{align*}
    &\mcE_t(\rho_t) = \mu \\
    &\vspace{1em}+(B-\mu)\invklup\!\big(\mdE_{\rho_t}[\hat{L}^{\text{exc}+}_t(h)]\big/(B - \mu),\Psi_t \big)\\
    &\vspace{1em}- (\mu + \kappa_t B)\invkllow\big(\mdE_{\rho_t}[\hat{L}^{\text{exc}-}_t(h)]\big/(\mu + \kappa_t B),\Psi_t \big),
\end{align*}
where $\Psi_t = \KL{\rho_t}{\rho_{t-1}^*} + \ln(4T\sqrt{N_t}/\delta)/N_t$, and
$\rho_{t-1}^*$ is a distribution on $\mcH$ informed by $S_{\leq t-1}$. Via \cref{thm:pacbayes-split-kl}, this is a PAC-Bayesian bound on $\Ep{\rho_t}{L_t^\text{exc}}$ that holds with probability at least $1 - \delta/T$, i.e., 
\begin{equation*}
    \mdP\bigl(\exists \rho_t \in \mcP_t \text{ such that } \Ep{\rho_t}{L^\text{exc}_t(h)}\geq \mcE_t(\rho_t)\bigr) \leq \delta/ T.
\end{equation*}

Applying this construction recursively with $T$ steps therefore yields a recursive PAC-Bayesian bound that holds with probability at least $1 - \delta$.

\section{EXPERIMENTS}\label{sec:exp}
The aim of our experiments is to address the following questions.
\emph{Q1}: Can the test-time return of a policy be predicted with high precision across a range of environments and policies with varying levels of expertise?
\emph{Q2}: What is the influence of the structure of a PAC-Bayesian bound?
\emph{Q3}: How does the validation set size affect the tightness of the risk certificate guarantee?

\subsection{Experiment Design}\label{sec:exp:design}
We evaluate our certificate-generation pipeline at an error tolerance of $\delta = 0.025$ on three popular representative actor-critic algorithms:
PPO~\citep{schulman2017proximal}, a classical and still widely used approach due to its robustness and simplicity despite comparatively low sample efficiency;
SAC~\citep{haarnoja2018soft}, a strong off-policy baseline that combines stability with high sample efficiency through entropy-regularized actor-critic learning; and
REDQ~\citep{chen2021randomized}, a more recent method that exemplifies advances in sample efficiency by leveraging randomized ensembles of Q-functions.
Training and hyperparameter details for each method are provided in the appendix.

\paragraph{Environments.} 
We rely on environments from three benchmarks. 
All three algorithms are evaluated in the main paper on three MuJoCo~\citep{todorov2012mujoco} environments: \emph{Half-Cheetah}, \emph{Humanoid}, and \emph{Hopper}, due to their widespread use in the community and the representative value of the platforms for real-world use cases. 
The appendix includes REDQ results on two additional MuJoCo environments and three environments each from DM-Control~\citep{tassa2018deepmind} another popular locomotion benchmark and MetaWorld~\citep{yu2020meta}, which focuses on robotic manipulation.

\paragraph{Data Generation}
We train each agent for up to \num{300000} steps, after which the policy is frozen.
The learned policy is then run in evaluation mode for another 200 episodes, the first half of which are used to fit the bound, and the remainder serve as a test dataset to evaluate generalization performance.
The discounted return on the test set is predicted by fitting a PAC-Bayes bound using the validation set.

\paragraph{Policy Instances.}
We define a policy instance as the output of a single policy-training round. In our experiments, we consider five policy instances, each obtained by training with a different initial seed.
Due to the stochastic nature of initialization and training, each instance follows a unique trajectory.
We construct individual bounds for each instance and account for randomness in the risk-certificate generation process by repeating the procedure five times for every policy instance.
To model policies of varying quality, we evaluate three training stages of each policy, each reflecting a different level of expertise: \emph{Starter}, a policy trained for \num{100000} steps; \emph{Intermediate}, trained for \num{200000} steps; and \emph{Expert}, trained for \num{300000} steps, after which no further performance improvements were observed.

\paragraph{Baselines.} 
We design our baselines with the following goals:
\emph{(i)} how well a PAC-Bayes bound predicts test-time performance,
\emph{(ii)} whether informative priors yield tighter guarantees,
\emph{(iii)} whether the bound tightens when the recursive scheme is used, and
\emph{(iv)} whether increasing recursion depth improves tightness.
As this is the first work to evaluate generalization bounds tailored to continuous control with deep actor-critics, there are no existing baselines for comparison.
We consider two non-recursive baselines: \emph{NonRec-NonInf}, a PAC-Bayes-kl inverse bound 
(see \cref{thm:pac_bayes_kl}) 
with a non-informative prior that is independent of the training data, and \emph{NonRec-Inf}, a data-informed variant in which the dataset is split equally into $\mcD = \mcD_{\text{prior}} \cup \mcD_{\text{bound}}$, allowing the prior to depend on $\mcD_{\text{prior}}$ and the empirical loss to be computed on $\mcD_{\text{bound}}$.
We evaluate our approach at two recursion depths, $T=2$ (\emph{Rec T=2}) and $T=6$ (\emph{Rec T=6}), to assess the effect of recursion.

\paragraph{Performance Metrics.}
We evaluate the bounds using three metrics:
\emph{Normalized bound value}: To ensure comparability across environments with different reward scales, squared discounted return prediction errors are normalized by the maximum return observed during training. A value close to zero implies the bound closely follows the actual returns.
\emph{Tightness}: The difference between the predicted bound and the actual test error; smaller values indicate more accurate estimates of the discounted return prediction error.
\emph{Correlation}: A linear correlation is expected between the risk certificates and the observed test errors across policy instances.

We evaluate and compare the final posterior loss $\rho$ on full training data and held-out test data, alongside the corresponding PAC-Bayes bounds across all methods and environments. To mitigate overfitting common in continuous-control settings, where consecutive samples are highly correlated, we apply a thinning strategy that reduces redundancy while preserving data diversity. Full details for each experiment are provided in 
\cref{appsec:expdetails}.
We provide an implementation at \texttt{anonymous}.

\subsection{Results} 
Comprehensive results for every environment, policy instance, and repetition are presented in 
\cref{app:results}. 
The main text focuses on aggregated results.

\begin{figure*}
    \centering
    \adjustbox{max width=1\textwidth}{
    \includegraphics{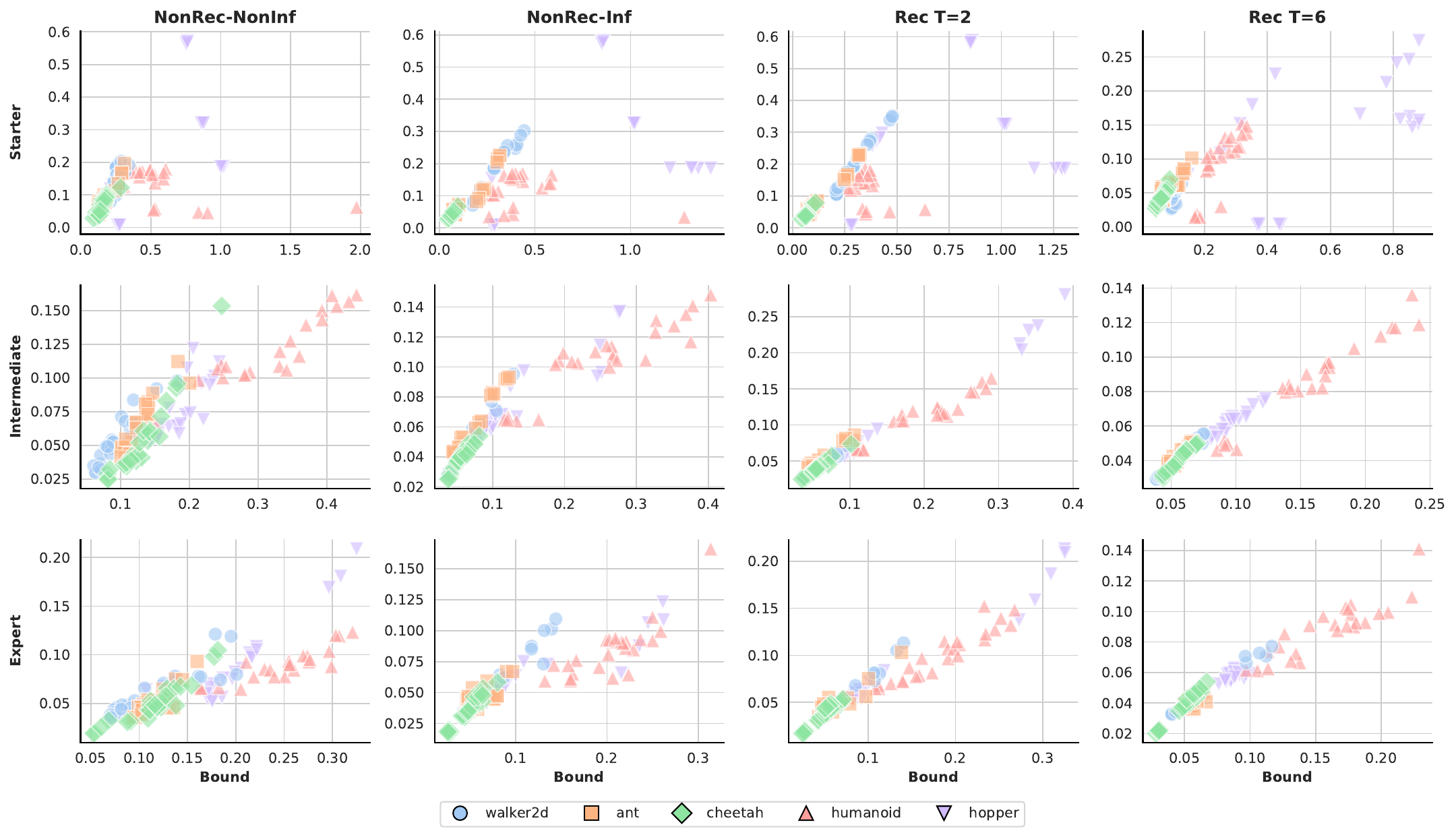}
    }
    \caption{
    \emph{Correlation between bounds and test errors.} PAC-Bayes bounds (x-axis) are plotted axis against true test errors (y-axis) for REDQ across five MuJoCo environments, policy instances, and repetitions to visualize correlation. We observe a high correlation, especially as policies improve and bounds become recursive.
     }
    \label{fig:scatter-plot}\vspace{-1em}
\end{figure*}

\paragraph{Q1: Can the test-time return of a policy be predicted with high precision?}

\cref{fig:scatter-plot} presents scatter plots of all PAC-Bayesian bounds discussed in \ref{sec:exp:design}, alongside policy instances and repetitions, against their respective test set errors across environments and levels of policy expertise for REDQ on five MuJoCo environments.
For every bound, the correlation between the bound and the test error increases with policy expertise.
Within a fixed expertise level, the correlation also improves as the bound becomes more advanced, a trend already evident for noisier starter policies. For example, in the brittle Hopper environment, which exhibits the weakest correlations overall, moving from NonRec-NonInf to Rec with $T=6$ raises the Pearson correlation from $0.4$ to $0.65$.
At higher expertise levels, the recursive bounds achieve correlations above $0.9$ in almost all environments.
Overall, there is a strong correlation between bounds and test errors.
There is also increasing scatter as the expertise level decreases. This is expected, as the effects of an unconverged policy on environment dynamics are less predictable.
The bounds therefore provide a reliable prediction of the test-time return, thereby answering Q1.
See the appendix for the corresponding correlation plots for the other agent and benchmark combinations.

\begin{center}
    \begin{tikzpicture}
    \node[
        draw=orange!70,
        fill=orange!10,
        line width=1pt,
        rounded corners=3mm,
        inner sep=5pt,
        text width=0.95\columnwidth,
        align=center
    ] {
        \textbf{Answer:} \emph{Bounds and test errors are strongly correlated.}
    };
    \end{tikzpicture}
\end{center}

\begin{figure*}
    \centering
    \includegraphics[width=0.99\textwidth]{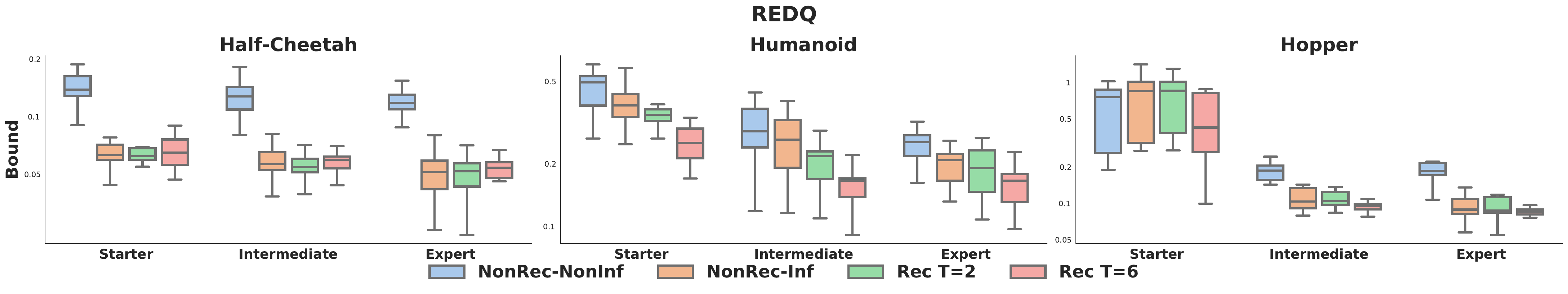}\\[1em]
    \includegraphics[width=0.99\textwidth]{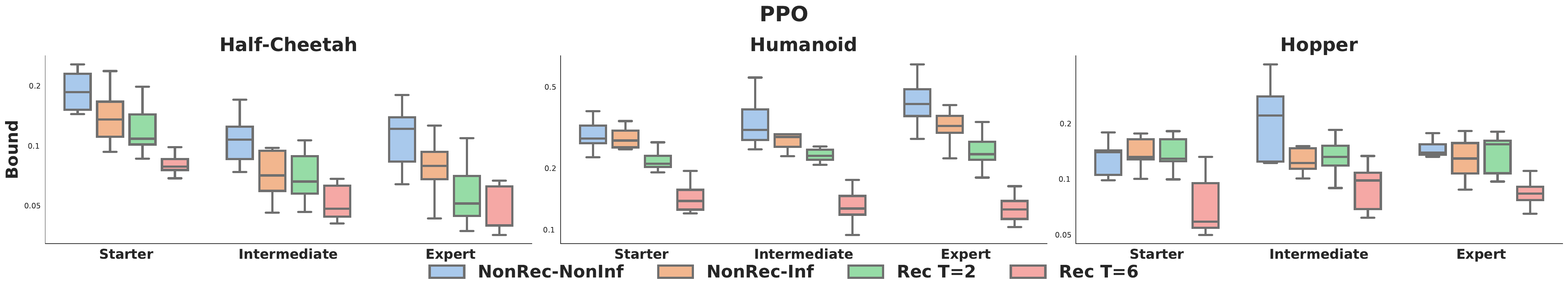}\\[1em]
    \includegraphics[width=0.99\textwidth]{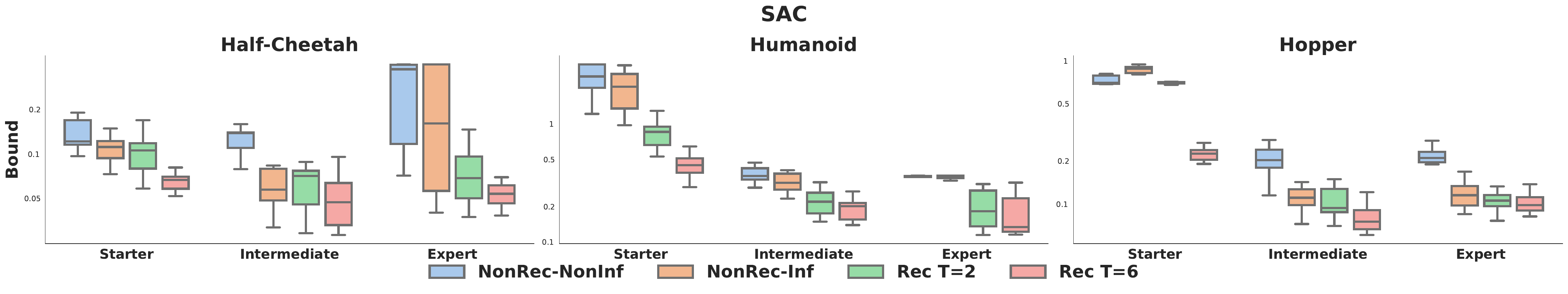}    
    \caption{
    \emph{Bound values.} Normalized bound values for all baselines across three MuJoCo environments and all policy quality levels. Results are aggregated across all seeds and repetitions.
}
    \label{fig:bound_mujoco}
\end{figure*}

\paragraph{Q2: What is the influence of a PAC-Bayes bound's structure?}
In \cref{fig:bound_mujoco} we plot the normalized bounds aggregated over policy instances and repetitions for each of the three models on three MuJoCo environments.
Data-informed priors usually improve bounds across environments for all policy levels and actor-critic methods. Introducing recursion further tightens bounds, with deeper recursion generally yielding the tightest results. 

\begin{center}
    \begin{tikzpicture}
    \node[
        draw=orange!70,
        fill=orange!10,
        line width=1pt,
        rounded corners=3mm,
        inner sep=5pt,
        text width=0.95\columnwidth,
        align=center
    ] {
        \textbf{Answer:} \emph{Bound tightness improves with increasing recursive depth.}
    };
    \end{tikzpicture}
\end{center}

\begin{figure}
  \centering
    \includegraphics[width=0.5\columnwidth]{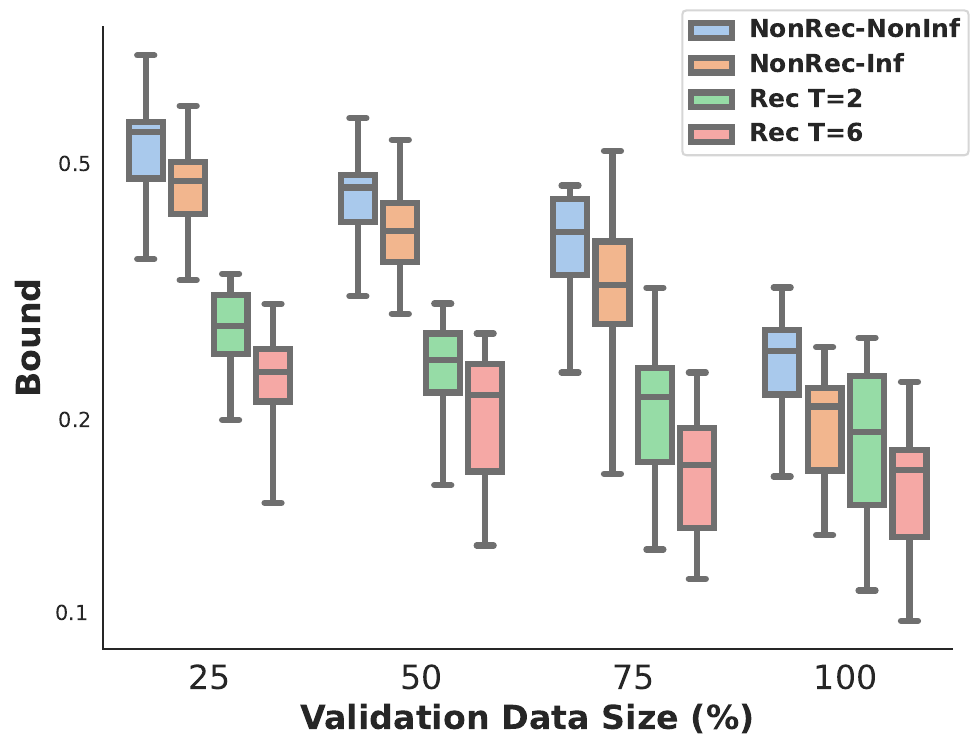}
        \caption{
        \emph{Effect of validation data size on tightness.} 
        Bounds for REDQ on Humanoid with an expert policy. 
        }
    \label{fig:validation_size}
\end{figure}

\paragraph{Q3: How does the validation set size influence the tightness of the risk certificate guarantee?}
Collecting validation data from physical robots is often costly. Hence, the sample efficiency of a risk-certificate generation pipeline is of particular interest. \cref{fig:validation_size} presents the tightness scores of the bounds across varying validation set sizes for REDQ in the Humanoid environment, while the test set size is kept fixed.
As expected, larger validation sets yield tighter bounds, with the effect most pronounced for the proposed recursive bounds. A recursion with depth $T=6$ achieves tightness comparable to that of the nonrecursive bounds while requiring only half as many data points.
These findings demonstrate that recursive bounds substantially improve sample efficiency, addressing Q3.

\begin{center}
    \begin{tikzpicture}
    \node[
        draw=orange!70,
        fill=orange!10,
        line width=1pt,
        rounded corners=3mm,
        inner sep=5pt,
        text width=0.95\columnwidth,
        align=center
    ] {
    \textbf{Answer:} \emph{Recursion improves sample efficiency.}
    };
    \end{tikzpicture}
\end{center}

\begin{figure}
  \centering
    \includegraphics[width=0.5\columnwidth]{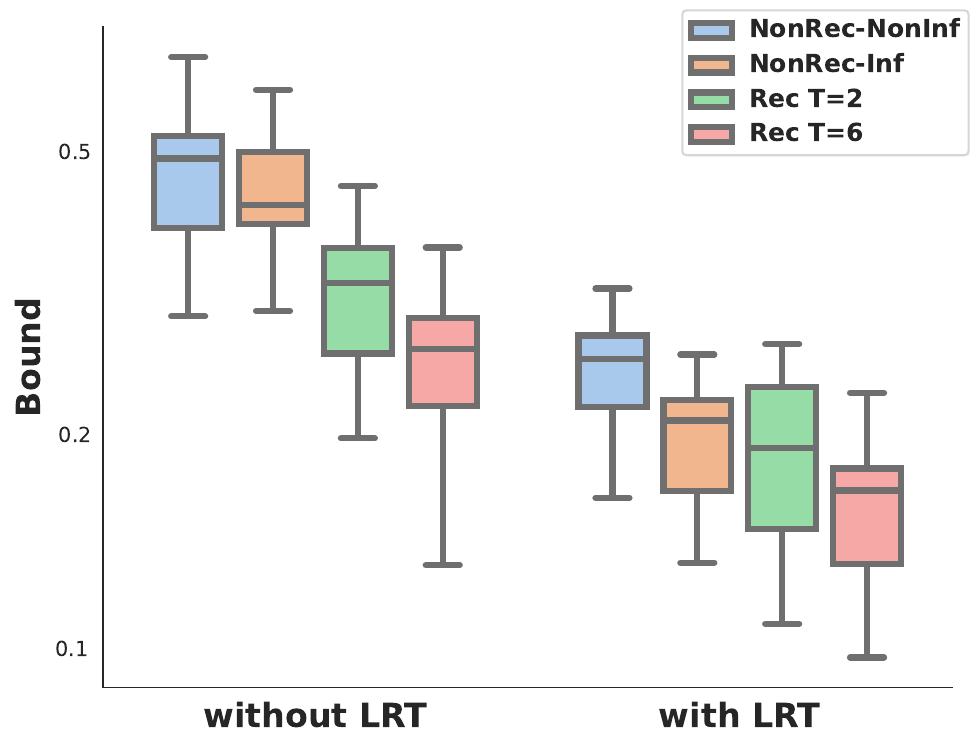}
    \vfill
    \caption{\emph{Local reparameterization trick (LRT).} Influence for REDQ on Humanoid with an expert policy.}
    \label{fig:local_reparam}
\end{figure}

\paragraph{Local reparameterization improves tightness.} 
To train our model, we use a Bayesian neural network (BNN) that represents uncertainty by learning distributions over parameters.
To our knowledge, prior work on PAC-Bayesian risk certification with BNNs has relied exclusively on \citet{pmlr-v37-blundell15}'s \emph{Bayes by backprop} approach \citep[see, e.g.,][]{perez2021tighter}.
We show in \cref{fig:local_reparam} that using the \emph{local reparameterization trick (LRT)} \citep{kingma2015variational} to compute the empirical risk term in the bound calculation substantially improves the tightness of all four evaluated bounds.
This effect holds even for the already saturated expert-level policy in the challenging Humanoid environment.
Further details can be found in 
\cref{app:results}.

\section{Limitations, Future Work, and Broader Impact}\label{sec:conclusion}

We restricted our empirical investigation to three actor-critic algorithms on three benchmark suites. This was a deliberate choice to facilitate interpretation and maintain feasibility. We do not expect meaningful additional information from extending the same pipeline to additional RL agents and benchmarks. The next major step would be to implement our pipeline on a physical platform under controlled conditions. The applicability of our findings to more advanced control settings, such as sparse-reward scenarios that require goal-conditioned or hierarchical RL algorithm design, is subject to further investigation. We leave this enterprise to future work, as deep learning-based solutions for such setups have not yet reached sufficient maturity to move beyond simulations.
Another significant advance would be to proceed from our current self-certified policy evaluation approach to self-certified policy optimization in an online setting. This would necessitate training the policy via a PAC-Bayes bound. However, RL is a feedback-loop system in which ensuring convergence, numerical stability, and optimal trade-offs between exploration and exploitation are major determinants of stable training. While promising preliminary results exist \citep{tasdighi2023pac,tasdighi2024deep}, the problem is fundamental and requires a dedicated research program—an effort that goes beyond the scope of a single paper.

Our work contributes to the trustworthy development of agentic AI technologies, thereby promoting their adoption by society. Public concerns about such technologies will be even more pronounced when they are deployed on physical systems in direct contact with humans. Thanks to reliable risk certificates, such safety-critical technologies are likely to receive wider adoption. This, in turn, will further accelerate their development by expanding the pool of practice and observations.

\printbibliography

\appendix

\section{Theory}

\subsection{PAC-Bayes-$\kl$ bound}
Assuming the definitions in \cref{par:notation}, we cite the following bound and $\kl$-inequality. See the respective references for further details.
\begin{theorem}[PAC-Bayes-$\kl$ bound \citep{seeger2002pac,maurer2004note}]\label{thm:pac_bayes_kl}
For any probability distribution $\rho_0 \in \mcP$ that is independent of $\mcD$ and any $\delta \in (0,1)$, we have 
\begin{equation*}
\mdP\Bigl(\exists \rho \in \mcP: \kl\bigl(\mdE_{\rho}[\hat L(h)]\Vert \Ep{\rho}{L(h)}\bigr) \geq \big(\KL{\rho}{\rho_0} + \ln(2\sqrt{N}/\delta)\big)/N\Bigr) \leq \delta.
\end{equation*}
\end{theorem}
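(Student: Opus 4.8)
The plan is to follow the classical two-ingredient route for $\kl$-based PAC-Bayes bounds: a change-of-measure inequality that converts the posterior expectation into a KL term plus a log-moment, and a concentration bound on the moment generating function (MGF) of the empirical $\kl$ for a \emph{fixed} hypothesis. First I would invoke the Donsker--Varadhan variational characterization of the KL divergence: for any measurable $\phi:\mcH\to\mdR$ and any $\rho,\rho_0\in\mcP$,
\[
\Ep{\rho}{\phi(h)} \leq \KL{\rho}{\rho_0} + \ln \Ep{\rho_0}{e^{\phi(h)}}.
\]
I would apply this with $\phi(h) = N\,\kl\bigl(\hat L(h)\Vert L(h)\bigr)$. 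Since $\kl(\cdot\Vert\cdot)$ is jointly convex in its two arguments, Jensen's inequality gives $\kl\bigl(\Ep{\rho}{\hat L(h)}\Vert \Ep{\rho}{L(h)}\bigr) \leq \Ep{\rho}{\kl(\hat L(h)\Vert L(h))}$, so the left-hand side of the claim (times $N$) is bounded by $\Ep{\rho}{\phi(h)}$.

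Next I would control the data-expectation of the exponential moment $\xi \deq \Ep{\rho_0}{e^{N\kl(\hat L(h)\Vert L(h))}}$. Because $\rho_0$ is independent of $\mcD$, Fubini's theorem exchanges the two expectations, reducing the task to the single-hypothesis MGF estimate $\mdE_{\mcD}\bigl[e^{N\kl(\hat L(h)\Vert L(h))}\bigr] \leq 2\sqrt{N}$. For a fixed $h$, $N\hat L(h)$ is a sum of $N$ independent $[0,1]$-valued terms with mean $L(h)$, and this estimate follows from a binomial-type tail argument (Maurer's refinement of Seeger's earlier $(N{+}1)$ bound). Consequently $\mdE_{\mcD}[\xi]\leq 2\sqrt N$, and Markov's inequality yields $\mdP\bigl(\xi \geq 2\sqrt N/\delta\bigr)\leq \delta$.

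Finally, on the complementary event, which has probability at least $1-\delta$, I would chain the inequalities so that for \emph{every} $\rho\in\mcP$ simultaneously,
\[
N\,\kl\bigl(\Ep{\rho}{\hat L(h)}\Vert \Ep{\rho}{L(h)}\bigr) \leq \Ep{\rho}{\phi(h)} \leq \KL{\rho}{\rho_0} + \ln\xi \leq \KL{\rho}{\rho_0} + \ln\bigl(2\sqrt N/\delta\bigr),
\]
and dividing by $N$ recovers the stated bound. The key structural point is that the variational inequality holds for all $\rho$ on the \emph{same} high-probability event, so the ``$\exists\rho$'' quantifier in the statement is handled automatically, without any union bound over hypotheses. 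The main obstacle is the single-hypothesis MGF estimate $\mdE_{\mcD}[e^{N\kl(\hat L(h)\Vert L(h))}]\leq 2\sqrt N$; establishing it requires a careful analytic estimate of the binomial tails of $\hat L(h)$ around $L(h)$, whereas the change-of-measure and Markov steps are routine assembly once this ingredient is in hand.
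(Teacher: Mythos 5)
The paper does not actually prove this theorem---it defers entirely to \citet{maurer2004note}---and your proposal is precisely the standard argument underlying that reference: Donsker--Varadhan change of measure applied to $\phi(h)=N\,\kl\bigl(\hat L(h)\Vert L(h)\bigr)$, Jensen via joint convexity of $\kl$, Maurer's moment bound $\mdE_{\mcD}\bigl[e^{N\kl(\hat L(h)\Vert L(h))}\bigr]\leq 2\sqrt{N}$ combined with Fubini and Markov, with the uniformity over $\rho$ coming for free from the change-of-measure step. Your proof is correct and takes essentially the same route as the cited source (the only pedantic caveat being that Maurer establishes the $2\sqrt{N}$ estimate for $N\geq 8$, a restriction the paper's statement also omits).
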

\begin{proof}
    See, e.g., \citet{maurer2004note} for a proof of the bound.
\end{proof}

\begin{lemma}[$\kl$-inequality \citep{langford2005tutorial,foong2021tight,foong2022note}]\label{lem:kl-inequality}
Let $Z_1, \ldots, Z_N$ be i.i.d.\ random variables taking values in an interval $[0,1]$, and let $\E{Z_n}=p$ for all $n$. Let their empirical mean be $\hat{p}=\frac{1}{N}\sum_{n=1}^N Z_n$. Then, for any $\delta \in (0,1)$, we have

  \begin{equation*}
      \mdP\left(\kl(\hat{p}\Vert p)\geq \ln(1/\delta)/N\big.\right) \leq \delta,
  \end{equation*}
  the inverse of which is given by
  \begin{equation*}
       \mdP\left(p\geq \invklup\left(\hat{p},\ln(1/\delta)/N\right)\!\big.\right)\leq \delta,
       \end{equation*}
   and
    \begin{equation*}
       \mdP\left(p\leq \invkllow\left(\hat{p},\ln(1/\delta)/N\right)\!\big.\right)\leq \delta.
  \end{equation*}
\end{lemma}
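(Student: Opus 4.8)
The plan is to establish the lemma by the Cramér–Chernoff (exponential-moment) method rather than through an expected-exponential-moment-plus-Markov argument: the latter would introduce a $\sqrt{N}$ factor like the one appearing in \cref{thm:pac_bayes_kl}, whereas the clean threshold $\ln(1/\delta)/N$ here calls for the sharp one-sided large-deviation exponent. The starting observation is that, for fixed $\hat p$, the map $p\mapsto \kl(\hat p\Vert p)$ is strictly decreasing on $[0,\hat p]$ and strictly increasing on $[\hat p,1]$, vanishing at $p=\hat p$. Hence the sublevel set $\{p:\kl(\hat p\Vert p)\le\veps\}$ is precisely the interval $[\invkllow(\hat p,\veps),\invklup(\hat p,\veps)]$, and the two inverse statements are exactly the two one-sided deviation events of $\hat p$: the event $\{p\ge\invklup(\hat p,\veps)\}$ coincides with $\{\hat p\le q^-\}$ for the unique $q^-\le p$ with $\kl(q^-\Vert p)=\veps$, and symmetrically $\{p\le\invkllow(\hat p,\veps)\}$ coincides with $\{\hat p\ge q^+\}$ for the unique $q^+\ge p$. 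I would therefore reduce the whole lemma to proving the two one-sided tail bounds $\mdP(\hat p\le q)\le e^{-N\kl(q\Vert p)}$ for $q\le p$ and $\mdP(\hat p\ge q)\le e^{-N\kl(q\Vert p)}$ for $q\ge p$.

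For the tail bounds themselves I would first bound the moment generating function of a single variable: since $z\mapsto e^{\lambda z}$ is convex, $e^{\lambda z}\le (1-z)+z e^{\lambda}$ for all $z\in[0,1]$, and taking expectations gives $\E{e^{\lambda Z_n}}\le 1-p+p e^{\lambda}$. This is the key step, as it reduces the general $[0,1]$-valued case to the Bernoulli($p$) extremal case. By independence, $\E{e^{\lambda\sum_n Z_n}}\le (1-p+pe^{\lambda})^N$, and Markov's inequality applied to $e^{\lambda\sum_n Z_n}$ (for $\lambda>0$) yields $\mdP(\hat p\ge q)\le e^{-\lambda N q}(1-p+pe^{\lambda})^N$. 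Optimizing the exponent over $\lambda$, the stationary point $e^{\lambda}=\tfrac{q(1-p)}{p(1-q)}$ makes the per-sample exponent collapse to exactly $-\kl(q\Vert p)$, giving $\mdP(\hat p\ge q)\le e^{-N\kl(q\Vert p)}$; the lower tail is identical with $\lambda<0$.

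Finally I would assemble the pieces: choosing $\veps=\ln(1/\delta)/N$ so that $e^{-N\veps}=\delta$, the two one-sided tail bounds immediately give the two inverse statements, while the event $\{\kl(\hat p\Vert p)\ge\veps\}$ decomposes into the disjoint upper- and lower-deviation events, each controlled at level $\delta$. The main obstacle I anticipate is purely the bookkeeping around this two-sided decomposition: a naive union bound would cost a factor of two, so to land on the stated $\le\delta$ one must either read the first display through the operational one-sided statements that the inverse forms make precise, or invoke the sharper symmetric-deviation argument of \citet{foong2021tight}. The genuinely delicate analytic point, by contrast — verifying that the Chernoff optimization reproduces $\kl$ exactly — is a short closed-form computation once the stationary $\lambda$ is substituted.
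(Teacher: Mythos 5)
The paper does not actually prove this lemma---it delegates entirely to \citet{langford2005tutorial}, Corollary 3.7---so your self-contained Cram\'er--Chernoff derivation is a genuinely different (and more informative) route. Your two key steps are both correct and are exactly the ingredients the cited references use: the convexity bound $e^{\lambda z}\le (1-z)+z e^{\lambda}$ reduces a general $[0,1]$-valued variable to the $\Ber(p)$ extremal case (this is precisely the content of \citet{foong2022note}), and the stationary point $e^{\lambda}=\tfrac{q(1-p)}{p(1-q)}$ does collapse the optimized exponent to $-\kl(q\Vert p)$. Your reduction of the two inverse statements to one-sided tails is also right: since $p\mapsto\kl(\hat p\Vert p)$ decreases on $[0,\hat p]$ and increases on $[\hat p,1]$, the event $\{p\ge\invklup(\hat p,\veps)\}$ is exactly $\{\hat p\le q^-\}$ with $\kl(q^-\Vert p)=\veps$, $q^-\le p$, and symmetrically for the lower inverse. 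So the two displayed inverse bounds---which are the only forms the paper ever invokes downstream (in \cref{lem:split-kl} and the bound constructions)---are fully established by your argument.

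The one place you should not hedge is the first, two-sided display. You correctly sense that a union bound costs a factor of $2$, but the suggested escape via a ``sharper symmetric-deviation argument'' of \citet{foong2021tight} does not exist: the two-sided statement is false as written. Take $N=1$, $Z_1\sim\Ber(1/2)$, $\delta=1/2$; then $\kl(\hat p\Vert 1/2)=\ln 2=\ln(1/\delta)/N$ almost surely, so the event has probability $1>\delta$. The correct reading is the one your own reduction makes precise: the first display is shorthand for the pair of one-sided bounds, and whenever both directions are genuinely needed the threshold must be $\ln(2/\delta)/N$---which is exactly what the paper's \cref{lem:split-kl} uses after union-bounding two one-sided applications. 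Stating this explicitly, rather than appealing to a nonexistent sharpening, would make your proof complete.
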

\begin{proof}
    See \citet{langford2005tutorial}, Corollary 3.7, for a proof of the bound.
\end{proof}

\subsection{Split-$\kl$ inequality}
Let $Z\in[a,b]$, with $a,b \in \mdR$, be a random variable, and let $p = \E{Z}$. For $\mu \in [a,b]$, define ${Z^+ =\max\{0,Z-\mu\}}$ and $Z^- = \max\{0,\mu - Z\}$ so that $Z = \mu + Z^+ - Z^-$. Let $p^+ = \E{Z^+}$ and $p^- = \E{Z^-}$ be their respective expectations, and let ${\hat p^+ = \frac1N\sum_{n=1}^NZ_n^+}$ and $\hat p^- = \frac1N\sum_{n=1}^NZ_n^-$ be their empirical means for an i.i.d.\ sample $Z_1,\ldots, Z_N$. 

\begin{lemma}[Split-$\kl$ inequality \citep{wu2022split}]\label{lem:split-kl}
For any $\mu\in[a,b]$ and $\delta \in (0,1)$,
\begin{equation*}
    \mdP \Bigg(p \leq \mu + (b - \mu)\invklup\left(\frac{\hat p^+}{b - \mu},\frac{\ln(2/\delta)}{N}\right) - (\mu - a)\invkllow\Big(\frac{\hat p^-}{\mu-a},\frac{\ln(2/\delta)}{N}\Big)\!\!\Bigg) \geq 1 - \delta.
\end{equation*}
\end{lemma}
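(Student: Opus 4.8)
The plan is to reduce the split-$\kl$ inequality to two separate applications of the ordinary $\kl$-inequality (\cref{lem:kl-inequality}), one for each of the nonnegative components $Z^+$ and $Z^-$. The starting point is the pointwise identity $Z = \mu + Z^+ - Z^-$, which on taking expectations gives the exact decomposition $p = \mu + p^+ - p^-$. Since the goal is an upper bound on $p$, I would bound $p^+$ from above and $p^-$ from below and then recombine.

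First I would check boundedness. Because $Z \in [a,b]$, the truncation $Z^+ = \max\{0, Z - \mu\}$ satisfies $Z^+ \in [0, b - \mu]$, and likewise $Z^- = \max\{0, \mu - Z\} \in [0, \mu - a]$. Hence the rescaled variables $Z^+/(b-\mu)$ and $Z^-/(\mu-a)$ take values in $[0,1]$ and are i.i.d.\ as functions of the i.i.d.\ sample $Z_1,\ldots,Z_N$, so \cref{lem:kl-inequality} applies to each after rescaling. Applying the upper-inverse form to $Z^+/(b-\mu)$ at confidence level $\delta/2$ yields, with probability at least $1 - \delta/2$,
\begin{equation*}
    \frac{p^+}{b-\mu} \leq \invklup\!\left(\frac{\hat p^+}{b-\mu}, \frac{\ln(2/\delta)}{N}\right),
\end{equation*}
and applying the lower-inverse form to $Z^-/(\mu-a)$ at level $\delta/2$ yields, with probability at least $1 - \delta/2$,
\begin{equation*}
    \frac{p^-}{\mu-a} \geq \invkllow\!\left(\frac{\hat p^-}{\mu-a}, \frac{\ln(2/\delta)}{N}\right).
\end{equation*}
Splitting the budget as $\delta/2 + \delta/2$ is exactly what converts the $\ln(1/\delta)$ appearing in \cref{lem:kl-inequality} into the $\ln(2/\delta)$ of the claim. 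A union bound then guarantees both displays simultaneously with probability at least $1 - \delta$, and on that joint event substituting $p^+ \leq (b-\mu)\invklup(\cdots)$ together with $-p^- \leq -(\mu-a)\invkllow(\cdots)$ into $p = \mu + p^+ - p^-$ produces precisely the stated bound.

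The step I expect to require the most care is the bookkeeping of directions: one must use the \emph{upper} inverse for the additive component $p^+$ and the \emph{lower} inverse for the subtracted component $p^-$, since it is the lower bound on $p^-$ that keeps $-p^-$ small enough to preserve an upper bound on $p$. The only remaining subtlety is the degenerate endpoints $\mu = b$ (where $b - \mu = 0$ forces $Z^+ \equiv 0$) and $\mu = a$ (where $\mu - a = 0$ forces $Z^- \equiv 0$); in each case the corresponding term drops out and the bound reduces to a single application of \cref{lem:kl-inequality}, so the apparent division by zero never actually arises in the argument.
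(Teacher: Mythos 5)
Your proposal is correct and follows exactly the route the paper takes: its proof is the one-line remark that the lemma ``follows by applying \cref{lem:kl-inequality} to each of the $\kl$ terms and a union bound,'' which is precisely your argument, spelled out with the decomposition $p = \mu + p^+ - p^-$, the rescaling of $Z^+$ and $Z^-$ into $[0,1]$, the correct choice of upper versus lower inverse, and the $\delta/2 + \delta/2$ split that produces the $\ln(2/\delta)$ term. Nothing further is needed.
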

\begin{proof}
    The lemma follows by applying \cref{lem:kl-inequality} to each of the $\kl$ terms and a union bound.
\end{proof}

\section{Related work} \label{sec:related_work}

\paragraph{Theoretical analysis of reinforcement learning.}
Exploration in finite-horizon Markov decision processes (MDPs) has been addressed in several prior works. UCBVI \citep{azar2017minimax} offers a simple approach based on optimism under uncertainty, combining computational efficiency with conceptual clarity for exploration in finite-horizon MDPs. BayesUCBVI \citep{tiapkin2022dirichlet} builds on this by introducing a tabular, stage-dependent, episodic reinforcement learning algorithm that uses a quantile of the posterior distribution of Q-values for optimism, eliminating the need for explicit bonus terms. However, it remains uncertain whether the PSRL \citep{osband2013more} approach can achieve optimal problem-independent regret bounds. UCRL2 \citep{auer2008near}, an extension of UCRL \citep{auer2006logarithmic}, employs upper confidence bounds to guide policy selection, encouraging exploration of uncertain state–action pairs and reducing regret over time, resulting in a near-optimal regret bound that grows logarithmically with the number of episodes. \citet{agrawal2017optimistic} propose a PSRL-inspired algorithm for multi-armed bandits, using posterior sampling to incorporate uncertainty and to form an optimistic policy, achieving near-optimal regret bounds. It remains unclear whether it can achieve the problem-independent lower bound. OPSRL \citep{tiapkin2022optimistic} extends PSRL by using multiple posterior samples instead of a single one, providing high-probability regret-bound guarantees and ensuring strong performance.

\paragraph{PAC-Bayes analysis.}
PAC-Bayesian analysis provides a frequentist framework for integrating prior knowledge into learning algorithms \citep{shawe1997pac, mcallester1998some, alquier2024user}. It leverages priors that sustain high confidence during learning \citep{mcallester1999pac, seeger2002pac, catoni2007pac, thiemann2017strongly, rivasplata2019pac, perez2021tighter}, but the resulting confidence intervals rely on data excluded from prior formation, making prior data allocation a critical challenge. Strategies for crafting effective PAC-Bayesian priors address this limitation \citep{ambroladze2006tighter,dziugaite21ontherolw,wu2024recursive}.

\paragraph{PAC-Bayesian risk certificate.} 
In recent years, PAC-Bayesian methods have also been used to compress neural nets \citep{lotfi22pacbayes} and to provide risk certificates for both uninformed \citep{dziugaite2017computing} and data-informed priors \citep{dziugaite2018data,dziugaite21ontherolw,perez2021tighter}.
\citet{reeb2018learning} introduce PAC-Bayesian bounds for Gaussian process-based regression, and an increasing body of literature now provides risk certificates for deep generative models, such as VAEs \citep{mbake2023statistical}, GANs \citep{mbacke23generalization}, and contrastive learning frameworks \citep{nozawa2020contrastive}.
\citet{hsuren2022slr} use PAC-Bayes to provide realistic safety guarantees, focusing primarily on worst-case risk and out-of-distribution safety.

\paragraph{PAC-Bayesian RL.}
PAC analysis in RL has led to algorithms that provide formal guarantees on sample complexity and performance. \citet{strehl2006pac} introduced Delayed Q-Learning, an efficient model-free RL method. \citet{strehl2009reinforcement} established PAC bounds for both model-free and model-based methods in finite MDPs, highlighting sample efficiency. For complex observation spaces, \citet{krishnamurthy2016pac} proposed Least-Squares Value Elimination, extending contextual bandit frameworks to sequential settings with PAC guarantees.
In model-based RL, \citet{dann2017unifying} introduced the Uniform-PAC framework for finite-state episodic MDPs, achieving optimal sample complexity and regret bounds. \citet{jiang2018pac} improved model-based efficiency with a polynomial-complexity algorithm.
PAC-Bayesian frameworks have also been applied in RL for policy evaluation and stability. \citet{fard2011pac} introduced a PAC-Bayesian method for policy evaluation with probabilistic guarantees, which was subsequently extended to soft actor–critic approaches \citep{tasdighi2023pac} and deep exploration with sparse rewards \citep{tasdighi2024deep}.
These methods are designed exclusively for policy optimization, using PAC-Bayesian principles as algorithmic guidelines rather than for reliable risk certificate generation, which is our objective. Both methods employ approximately calculated PAC-Bayesian bounds without quantifying the resulting approximation errors. While such approximations are acceptable in policy search contexts, where learning performance is the main concern, they preclude rigorous analytical statements about test-time performance and therefore cannot support risk certification.
Despite these advancements, the tightness, that is, the quality, of risk certificates remained an open question.

\section{Algorithms}
    In this section, two pseudocodes are presented. \cref{alg:pacbayes_workflow} illustrates the overall pipeline of the method, capturing the main components from policy training to PAC-Bayes bound calculation.
    
    \cref{alg:rec_bound} presents the Recursive PAC-Bayes framework in detail, outlining the step-by-step calculation of the recursive bound across different splits of the training data, ultimately yielding the final bound. This framework supports multiple levels of recursion (e.g., two or six levels in our experiments). Details on the size of each data split and the selection strategy are provided in \cref{appsec:expdetails}.

\begin{algorithm}
\caption{Predicting Policy Return via PAC-Bayes Bound Fitting}
\label{alg:pacbayes_workflow}
\begin{algorithmic}[1]
\STATE {\bf Input:} Environment, actor-critic algorithm, BNN architecture, PAC-Bayes bounds
\STATE {\bf Output:} Predicted test-time discounted returns and PAC-Bayes generalization bounds

\vspace{0.5em}
\noindent {\bf Policy Training}
\STATE Train policy $\pi$ using the REDQ actor-critic algorithm.
\STATE Save policy parameters after specified training steps.

\vspace{0.5em}
\noindent {\bf Data Collection}
\STATE Disable further learning.
\STATE Execute the saved policy for a few roll-outs and collect data.
\STATE Split the data into training and test sets.

\vspace{0.5em}
\noindent {\bf Bayesian Model Training}
\STATE Initialize a Bayesian neural network (BNN).
\STATE Train the Bayesian model on the training data using a PAC-Bayes-inspired loss as the training objective.

\vspace{0.5em}
\noindent {\bf Bound Construction and Evaluation}
\FOR{each bound type}
   \STATE Specify prior and posterior distributions.
   \STATE Compute the PAC-Bayes bound using model-predicted outputs.
\ENDFOR

\vspace{0.5em}
\STATE {\bf Return:} Model predictions on the test data and bound evaluations on the training data.
\end{algorithmic}
\end{algorithm}

\begin{algorithm}
\caption{Recursive PAC-Bayes bound loss computation}
\label{alg:rec_bound}
\begin{algorithmic}[1]

\STATE {\bf Input:} Training dataset split $\mathcal{D} = S_1 \cup \cdots \cup S_T$ with total size $N = |\mathcal{D}|$, where $N_t = |S_{\geq t}|$; scaling factors $\kappa_1, \ldots, \kappa_T$; posterior parameters $\{\rho_1, \ldots, \rho_T\}$; and fixed confidence levels $\delta$, $\delta'$.

\STATE Calculate the loss for the first portion: $\hat{L}(h) \in [0, B]$ 
\begin{equation*}
       \mdE_{\rho_1}[\hat L^+(h)] \leq \kl^{-1,+} \left( \frac{1}{N_t} \sum \left( \max \left\{0, \hat{L}(h) - \mu \right\} \right), \frac{\ln(T/\delta')}{N_t}
       \right).
\end{equation*}

\STATE Specify the first portion of the recursive bound:
\begin{equation*}
   \mcB_1(\rho_1) = B\invklup\biggl(\frac{\mdE_{\rho_1}[\hat L^+(h)]}{B},\frac{\KL{\rho_1}{\rho_0^*} + \ln(2T\sqrt{N}/\delta)}{N}\biggr).
\end{equation*}

\FOR{each portion from $t=2$ to the end}
\STATE Calculate the excess loss:
\begin{equation*}
      L^\text{exc}_t(h) = L(h) - \kappa_t\Ep{\rho_{t-1}}{L(h')} \in [-\kappa_t B, B].
\end{equation*}

\STATE Then,
\begin{equation*}
      \mdE_{\rho_t}[\hat{L}^{\text{exc}+}_t(h)] \leq  \kl^{-1,+} \left( \frac{1}{N_t} \sum \left( \max\left\{0, \hat{L}^{\text{exc}}_t(h) - \mu \right\} \right), \frac{\ln(2T/\delta')}{N_t}
       \right),
\end{equation*}
\begin{equation*}
      \mdE_{\rho_t}[\hat{L}^{\text{exc}-}_t(h)] \leq  \kl^{-1,-} \left( \frac{1}{N_t} \sum \left( \max\left\{0, \mu - \hat{L}^{\text{exc}}_t(h) \right\} \right), \frac{\ln(2T/\delta')}{N_t}
       \right).
\end{equation*}

\STATE Compute $\mcE_t(\rho_t)$:
\begin{align*}
   \mcE_t(\rho_t) &= \mu + (B-\mu)\invklup\biggl(\frac{\mdE_{\rho_t}[\hat{L}^{\text{exc}+}_t(h)]}{B - \mu},\frac{\KL{\rho_t}{\rho_{t-1}^*} + \ln(4T\sqrt{N_t}/\delta)}{N_t} \biggr)\\
   &\quad - (\mu + \kappa_t B)\invkllow\biggl(\frac{\mdE_{\rho_t}[\hat{L}^{\text{exc}-}_t(h)]}{\mu + \kappa_t B},\frac{\KL{\rho_t}{\rho_{t-1}^*} + \ln(4T\sqrt{N_t}/\delta)}{N_t} \biggr).
\end{align*}

\STATE Update the bound iteratively:
\begin{equation*}
   \mcB_t(\rho_t) = \mcE_t(\rho_t,\kappa_t) + \kappa_t \mcB_{t-1}(\rho^*_{t-1}).
\end{equation*}

\STATE $t = t+1$
\ENDFOR

\STATE With probability at least $1 - \delta$,
\begin{equation*}
    \Ep{\rho_t}{L(h)} \leq \mcB_t(\rho_t).
\end{equation*}
   
\end{algorithmic}
\end{algorithm}

\section{Experimental Details}\label{appsec:expdetails}

In this section, we present the details and design choices for our environments, dataset, model training, and the replication of our experimental results. We consider five environments from version ‘v4’ of the MuJoCo suite \citep{todorov2012mujoco} (ant, half-cheetah, hopper, walker2d, humanoid), three environments from DM Control \citep{tassa2018deepmind} (ball-in-cup, reacher, walker), and three from Meta-World \citep{yu2020meta} (drawer-open, window-open, reach). All implementations in this work utilize the PyTorch framework \citep{paszke2019pytorch}, version ‘2.5.1’, and the OpenAI Gym environment \citep{brockman2016openai}, version ‘0.29.0’.\\

\subsection{Specific hyperparameters} 
For recursive PAC-Bayes, we use $\kappa_t = 1/2$ for all $t$ and study the impact of recursion for data split over different portions. In our experiments, we set $\delta = 0.025$, following \citet{wu2024recursive}'s implementation.\footnote{\url{https://github.com/pyijiezhang/rpb}}

In addition to the primary confidence parameter $\delta$, which controls the overall probability with which the PAC-Bayes bound holds, we introduce an auxiliary confidence parameter $\delta'$, following \citet{wu2024recursive}. This parameter accounts for the approximation error introduced when estimating the expected empirical loss and empirical excess loss via sampling \citep{perez2021tighter}, since these quantities lack closed-form expressions in our setting. Specifically, $\delta'$ ensures that these sample-based estimates are accurate with high probability. We set $\delta' = 0.01$, consistent with the choice in the referenced work, and apply a union bound to combine the confidence levels. As a result, the overall bound holds with probability at least $1 - \delta - \delta'$, covering both the generalization guarantee and the estimation accuracy. This consideration is applied only during the evaluation phase for estimating the bounds, not during the optimization process.

\subsection{Constructing validation and test datasets}

Our methodology comprises two phases: \emph{training} and \emph{evaluation}. To collect the transition data required for constructing validation and test datasets, we train an actor-critic agent (SAC, PPO, or REDQ). 

Training is conducted for \num{300000} environment steps, after which the resulting policy is saved as the \textit{expert} policy. We also snapshot the policy at \num{100000} and \num{200000} steps, referring to them as the \textit{starter} and \textit{intermediate} policies, respectively, to enable analysis across varying levels of agent proficiency.

In the evaluation phase, learning is disabled, and the agent is executed with exploration turned off and no policy updates performed. Using the frozen policy (at each of the three proficiency levels), we collect 100 episodes of state transitions and rewards to construct a validation dataset for fitting the PAC-Bayes bound. An additional 100 episodes are collected under the same conditions to form the test dataset. This test data is used to compute a proxy for generalization performance by evaluating the predicted discounted returns of the frozen policy on previously unseen trajectories.

\subsection{Neural network architectures} \label{NNarch}

\paragraph{REDQ agent training.}
We use an ensemble of ten critic networks and a single actor network. Each neural network consists of three layers, with layer normalization after each layer to regularize the network, following the approach of \citet{ball2023efficient}. Additionally, we use the concatenated ReLU activation function \citep{shang16understanding}, which combines the positive and negative parts of two ReLU activations and concatenates them.
This leads to richer feature representations and enables the learning of more complex patterns.

During training, we employ \num{10000} warm-up steps without policy updates, during which the agent only interacts with the environment; during the evaluation phase, we do not use any warm-up steps.
We use a replay ratio (RR) of one in both the training and evaluation phases, which provides training stability and sample efficiency. We employ the Huber loss in the critic to measure the discrepancy between predicted Q-values and target Q-values, as it consistently provides performance advantages across various baselines. Furthermore, the temperature parameter $\alpha$ is automatically tuned during training to regulate policy entropy and balance exploration and exploitation, following the method introduced by \citet{haarnoja2018soft-app}.

All these architectural and training choices were empirically found to improve the model’s overall performance. \cref{tab:hyperparameters} summarizes the hyperparameters and network configurations used in our experiments.

\textbf{PPO agent training.} Proximal Policy Optimization is an on-policy algorithm that trains a single actor network together with a single critic network representing the state-value function. The policy is updated using a clipped surrogate objective to prevent overly large updates, which improves stability but makes PPO less sample-efficient than off-policy methods. In our experiments, we followed the standard PPO baseline and evaluated performance on three widely used MuJoCo environments of varying difficulty and dimensionality: \texttt{HalfCheetah, Humanoid}, and \texttt{Hopper}. Since PPO typically requires more training to converge, we extended the training horizon to 1,000,000; 3,000,000; and 5,000,000 steps to obtain starter, intermediate, and expert policies, respectively. Risk bounds were computed in the same manner as in the REDQ experiments.

\textbf{SAC agent training.} Soft Actor-Critic is an off-policy algorithm that combines a single actor network with two Q-function critic networks, following the standard practice to mitigate overestimation bias. SAC is generally more sample-efficient than on-policy methods and stabilizes learning through target networks and soft updates. We implemented SAC in its canonical form and considered three training levels: 50,000 steps (starter), 100,000 steps (intermediate), and 1,000,000 steps (expert), using the same environments as above. The corresponding policies were then evaluated under our PAC-Bayesian risk bounds, consistent with the REDQ setting.

For both methods, the actor and critic networks follow the same neural network architecture described in \cref{NNarch} for REDQ.

\paragraph{Bayesian model training.} 
We aim to predict the test-time discounted return of a policy $\pi$ by fitting a PAC-Bayes generalization bound using training data. To this end, we use a Bayesian neural network (BNN), which allows us to capture model uncertainty and compute valid generalization guarantees.

The BNN is a feedforward neural network with two hidden layers and one output layer. Each layer is implemented as a variational Bayesian linear layer, where weights are modeled as independent Gaussian distributions with learnable means and variances. ReLU activations are used between layers to introduce nonlinearity.

During training, we apply the local reparameterization trick \citep{kingma2015variational}, which samples the layer outputs rather than the weights. This reduces the variance of the gradient estimates, improves training stability, and reduces computational cost.

We train the BNN using a PAC-Bayes-inspired objective based on McAllester's bound \citep{mcallester1998some}, which balances empirical loss and a KL divergence term. The KL term acts as a regularizer that penalizes deviation from a prior distribution, effectively controlling model complexity. This enables us to compute reliable generalization bounds and estimate the expected test-time return of the policy. Architectural details are summarized in \cref{tab:hyperparameters}.

\subsection{Further details on the bounds} 
We evaluate two non-recursive baselines and two recursive variants with different depths. 

For the uninformed baseline (NonRec-NonInf), we use a PAC-Bayes-kl inverse bound in which the prior is initialized without dependence on the training data. The posterior is learned using the entire training dataset, and the bound is evaluated on this full dataset.
For the informed case (NonRec-Inf), we split the training dataset in half: the first half is used to learn a data-informed prior, and the second to learn the posterior, with both steps minimizing a PAC-Bayes bound. The bound is evaluated using the same subset used to learn the posterior.

In the recursive settings, we apply these splits iteratively. For example, with recursion depth two, we divide the data into two 50–50 splits. For deeper recursion with six steps, we split the training set into increasingly smaller partitions based on the number of episodes $[3, 4, 6, 12, 25, 50]$. This approach allocates fewer data points for prior learning early on while reserving more for later refinement, enabling progressively more targeted fine-tuning.

Like \citet{wu2024recursive}, we apply a relaxation of the PAC-Bayes-kl bound in our experiments by optimizing the McAllester bound \citep{mcallester1998some}, while the bound evaluation is conducted using the split kl-based bound. 
We always utilize a set of factorized Gaussian distributions to represent the priors and posteriors associated with every trainable parameter within the classifiers. These distributions take the form \(\pi = \mathcal{N}(w, \sigma I)\), where \(w \in \mathbb{R}^d\) is the mean vector and \(\sigma\) denotes the scalar variance parameter. Initially, an uninformed prior \(\pi_0 = \mathcal{N}(w_0, \sigma_0 I)\) is employed, which does not depend on the training data. Here, the mean parameter is randomly initialized using Kaiming uniform initialization, while the log-variance parameter is initialized to a fixed value of (-4.6).
We train all models using the hyperparameters provided in \cref{tab:hyperparameters}, specified with the header 'PAC-Bayes bound fitting'.

\begin{table}
\centering
\caption{\emph{Hyperparameters used in our experiments.} This includes both hyperparameters applied to the Actor-Critic setup in agent training and Bayesian model training for PAC-Bayes bound fitting.}
\vspace{1em}
\label{tab:hyperparameters}
\adjustbox{max width=0.98\textwidth}{
\begin{tabular}{lcc}
\toprule
\textbf{Agent training: } & \\
\midrule
Evaluation episodes (evaluation mode) & 1 \\
Evaluation frequency (evaluation mode) & 1 \\
Evaluation episodes (training mode) & - \\
Evaluation frequency (training mode) & - \\
Discount factor $(\gamma)$ & 0.99 \\
$n$-step returns & 1 step \\
Replay ratio & 1 \\
Number of critic networks & 10 \\
Replay buffer size & 100{,}000 \\
Maximum timesteps$^{*}$ & 300{,}000 \\ 
Number of hidden layers for all networks & 2 \\
Number of hidden units per layer & 256 \\
Nonlinearity & CReLU \\
Mini-batch size  & 256 \\
Network regularization method & Layer normalization (LN) \citep{ball2023efficient} \\
Actor/critic optimizer & Adam \citep{kingma2014adam} \\
Optimizer learning rates $(\eta_{\phi}, \eta_{\theta})$ & 3e-4 \\
Polyak averaging parameter $(\tau)$ & 5e-3 \\
\midrule
\textbf{PAC-Bayes bound fitting:} & \\
\midrule
Number of hidden layers & 2 \\
Number of hidden units per layer & 256 \\
Nonlinearity & ReLU \\
Network regularization method & Layer normalization (LN) \citep{ball2023efficient} \\
BNN optimizer & Adam \citep{kingma2014adam}\\
Optimizer learning rate & 2e-2 \\
Gradient clipping type & max-norm \\
Gradient clipping threshold & 1 \\
Learning rate scheduler & StepLR\\
Learning rate decay factor & 0.5\\
Scheduler step size (epochs) & 10 \\

\bottomrule
\end{tabular}}
\end{table}

\paragraph{Mitigate Sample Correlation.}
To reduce overfitting caused by high correlation among consecutive samples—where states are highly similar and their associated target values nearly identical—a thinning strategy is employed. This strategy selectively samples fewer data points to weaken temporal dependence. Specifically, in environments such as \texttt{Ant} and \texttt{Half-Cheetah}, which tend to produce longer episodes, every fifth sample is retained. In contrast, for environments with shorter episodes, such as \texttt{Humanoid}, \texttt{Hopper}, and \texttt{Walker2d}, every third sample is retained. This approach reduces redundancy while ensuring that the dataset still contains sufficient information for training.

\paragraph{Effect of the Local Reparameterization Trick}
We incorporate the Local Reparameterization Trick (LRT) into our variational Bayesian neural network architecture, where, to our knowledge, it has not been used in previous Bayesian approaches to reinforcement learning. Unlike standard sampling methods that draw one set of weights per layer and propagate forward, LRT enables sampling at the level of individual pre-activations, which significantly reduces the variance of gradient estimates during training. This, in turn, leads to more stable optimization and an improved posterior. To evaluate its impact, we compare the tightness of all our baselines on the Humanoid environment for two versions of our model, with and without LRT. As shown in \cref{fig:local_reparam}, applying LRT consistently yields tighter bounds.
Results are aggregated over five policy instances and five repetitions each, considering only the expert-level policy. In both cases, the results show a clear improvement. These expert-policy improvements suggest gains at other policy levels as well.

\subsection{Computational requirements}
Experiments were conducted on a single computer equipped with a GeForce RTX 4090 GPU, an Intel(R) Core(TM) i7-14700K CPU (5.6 GHz), and 96 GB of memory. Training five policy instances for REDQ to convergence in each environment requires approximately 30 minutes per instance, totaling 150 minutes. Collecting validation and test episodes requires around 20 minutes per policy level, totaling 60 minutes. Model training and PAC-Bayesian bound computation across five policy instances, five repetitions, four baselines, and three policies take four minutes per run, totaling approximately \num{1200} minutes per environment.

\section{Further results} \label{app:results}

\subsection{Additional REDQ Bounds}
We evaluate REDQ on two additional MuJoCo environments (\cref{fig:mujoco_redq_rest}), as well as on two further benchmarks-DM Control (\cref{fig:dm_redq}) and Meta-World (\cref{fig:mw_redq})-and observe consistent qualitative results.

\begin{figure}
    \centering
    \includegraphics[width=0.98\linewidth]{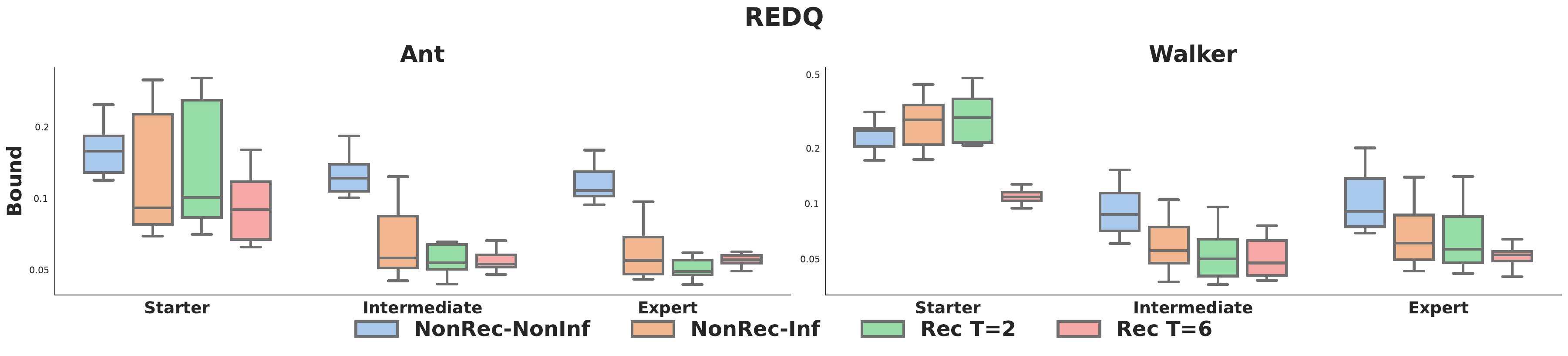}
    \caption{\emph{Bound values.} Normalized bound values for REDQ on two MuJoCo environments and all policy quality levels. Results are aggregated across all seeds and repetitions.}
    \label{fig:mujoco_redq_rest}
\end{figure}

\begin{figure}
    \centering
    \includegraphics[width=0.98\linewidth]{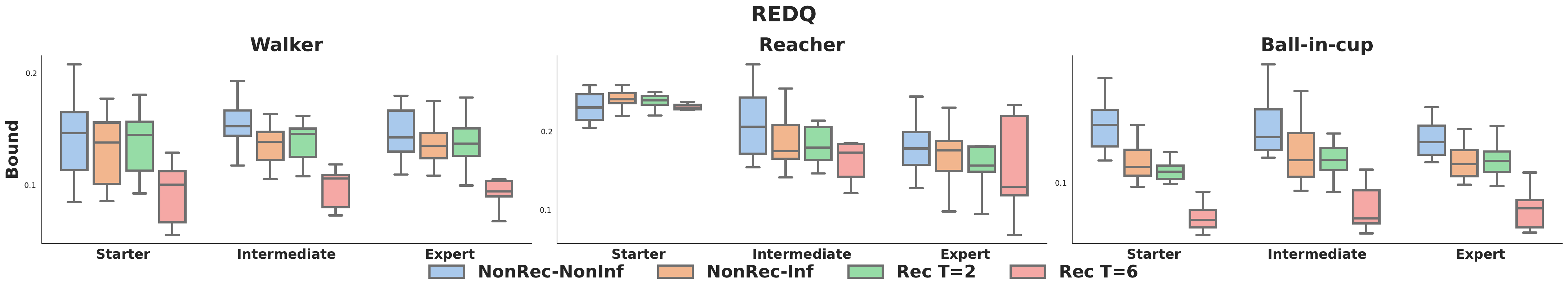}
    \caption{\emph{Bound values.} Normalized bound values for REDQ on three DM Control environments and all policy quality levels. Results are aggregated across all seeds and repetitions.}
    \label{fig:dm_redq}
\end{figure}

\begin{figure}
    \centering
    \includegraphics[width=0.98\linewidth]{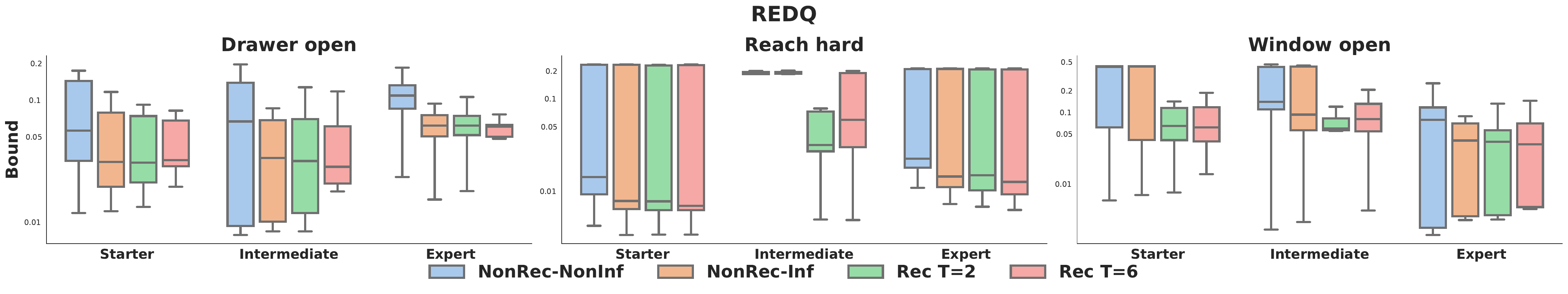}
    \caption{\emph{Bound values.} Normalized bound values for REDQ on three Meta-world environments and all policy quality levels. Results are aggregated across all seeds and repetitions.}
    \label{fig:mw_redq}
\end{figure}

\subsection{Correlation plots for PPO and SAC}
The corresponding plots to \cref{fig:scatter-plot} for PPO and SAC are provided in \cref{fig:scatter-plot-ppo} and \cref{fig:scatter-plot-sac}, respectively, which exhibit a similar correlation structure.

\begin{figure*}
    \centering
    \adjustbox{max width=1\textwidth}{
    \includegraphics{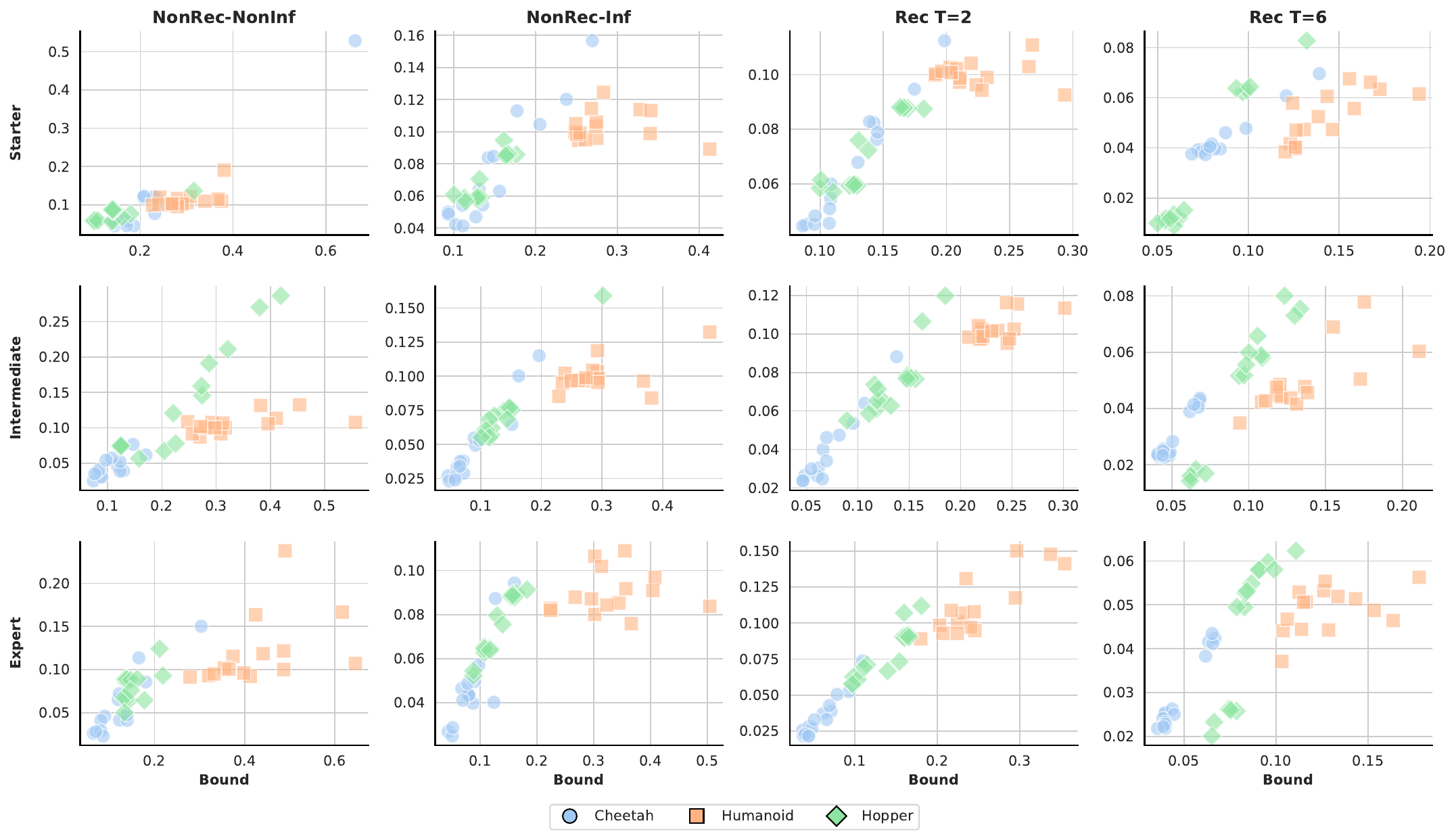}
    }
    \caption{
    \emph{Correlation between bounds and test errors.} PAC-Bayes bounds (x-axis) are plotted axis against true test errors (y-axis) for PPO across three MuJoCo environments, policy instances, and repetitions to visualize correlation. We observe a high correlation, especially as policies improve and bounds become recursive.
     }
    \label{fig:scatter-plot-ppo}\vspace{-1em}
\end{figure*}

\begin{figure*}
    \centering
    \adjustbox{max width=1\textwidth}{
    \includegraphics{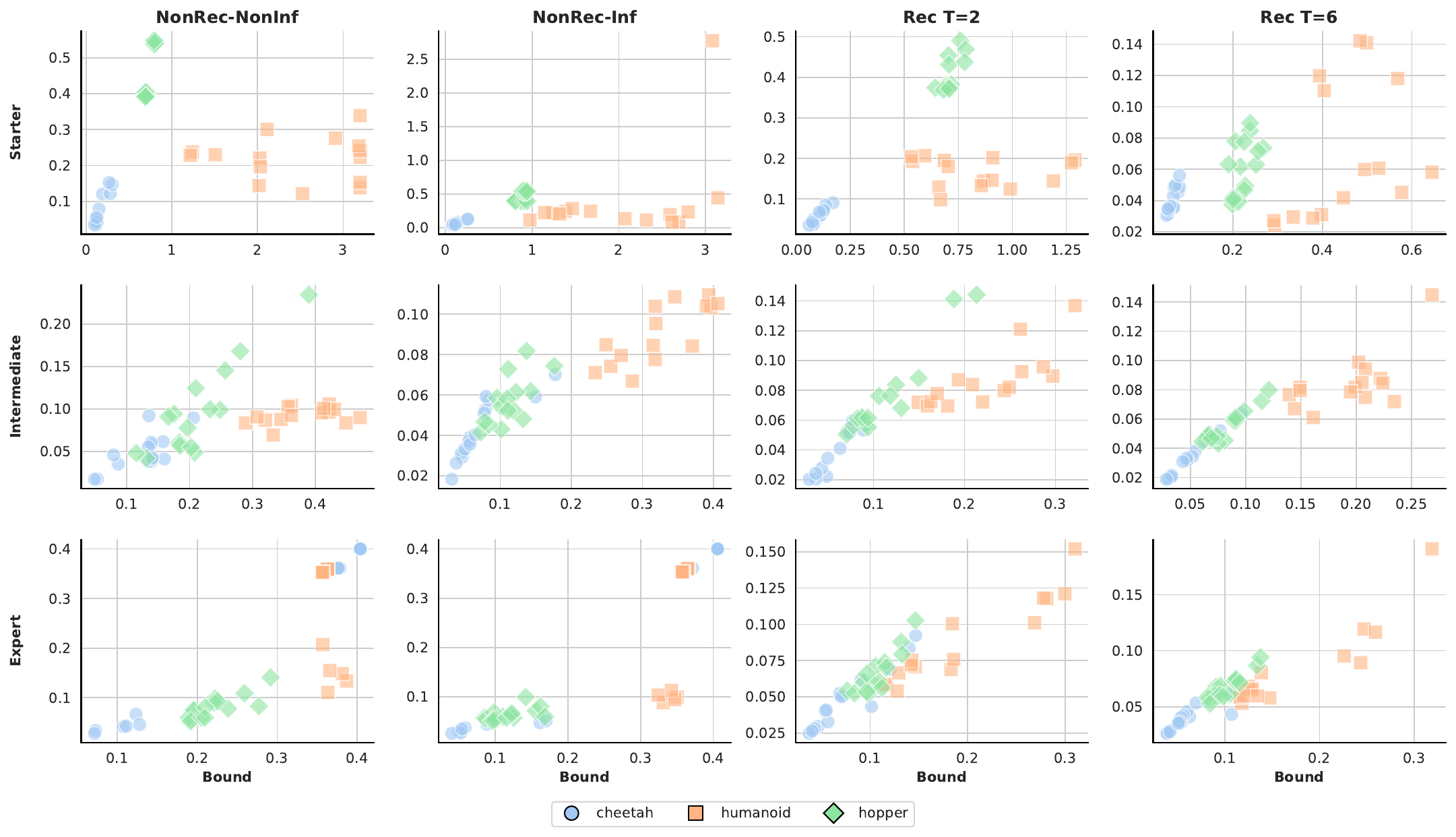}
    }
    \caption{
    \emph{Correlation between bounds and test errors.} PAC-Bayes bounds (x-axis) are plotted axis against true test errors (y-axis) for SAC across three MuJoCo environments, policy instances, and repetitions to visualize correlation. We observe a high correlation, especially as policies improve and bounds become recursive.
     }
    \label{fig:scatter-plot-sac}\vspace{-1em}
\end{figure*}

\subsection{Tightness of the bounds} 
\cref{fig:difference_mujoco_redq,fig:difference_dm_redq,fig:difference_mw_redq,fig:difference_mujoco_ppo,fig:difference_mujoco_sac} show the tightness of all baselines, measured as the difference between the normalized bound score and the normalized test error. Values are aggregated across policy instances and repetitions for each of the five environments, with smaller values indicating tighter bounds. Box plots display the distribution of these differences: the box covers the interquartile range (25th to 75th percentile), the median is marked inside, and whiskers extend to points within 1.5 times the interquartile range, illustrating typical variability. Outliers beyond this range are omitted for clarity. This effectively captures both the central tendency and the dispersion of the data across policies and methods.
The plot indicates that data-informed priors generally yield tighter bounds, although this effect is less pronounced at the Starter level. Moreover, applying recursion and increasing its depth further tighten the bounds. The improvements due to recursion are more noticeable in challenging environments and less so in simpler tasks.

\begin{figure}
    \centering
    \includegraphics[width=0.95\textwidth]{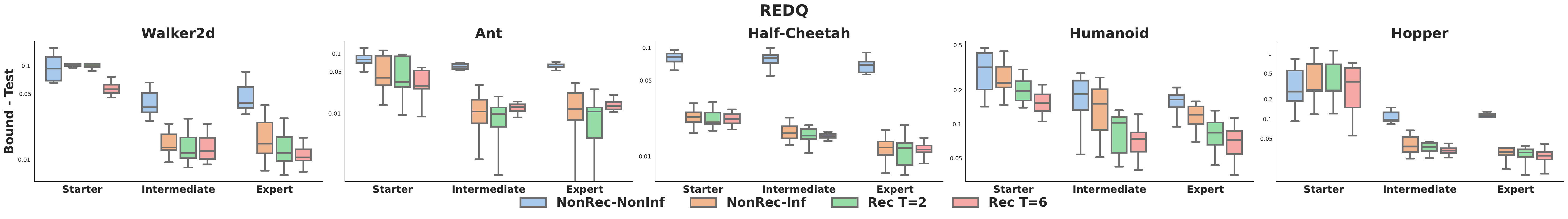}    
    \caption{
    \emph{(Bound - test) values.} Tightness of the bound over all baselines and policy levels considered for REDQ across five MuJoCo environments. The plots aggregated over policy instances and their repetitions. 
    }
    \label{fig:difference_mujoco_redq}
\end{figure}
\begin{figure}
    \centering
    \includegraphics[width=0.95\textwidth]{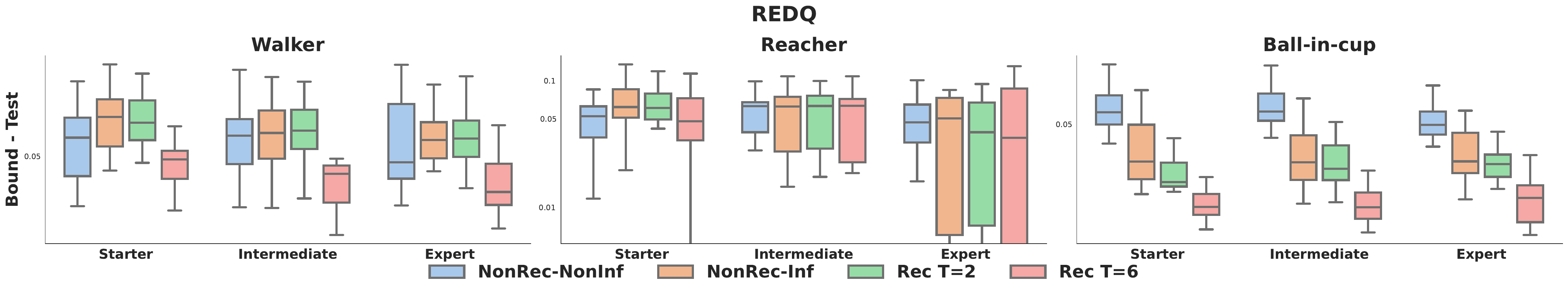}    
    \caption{
    \emph{(Bound - test) values.} Tightness of the bound over all baselines and policy levels considered for REDQ across three DM Control environments. The plots aggregated over policy instances and their repetitions. 
    }
    \label{fig:difference_dm_redq}
\end{figure}
\begin{figure}
    \centering
    \includegraphics[width=0.95\textwidth]{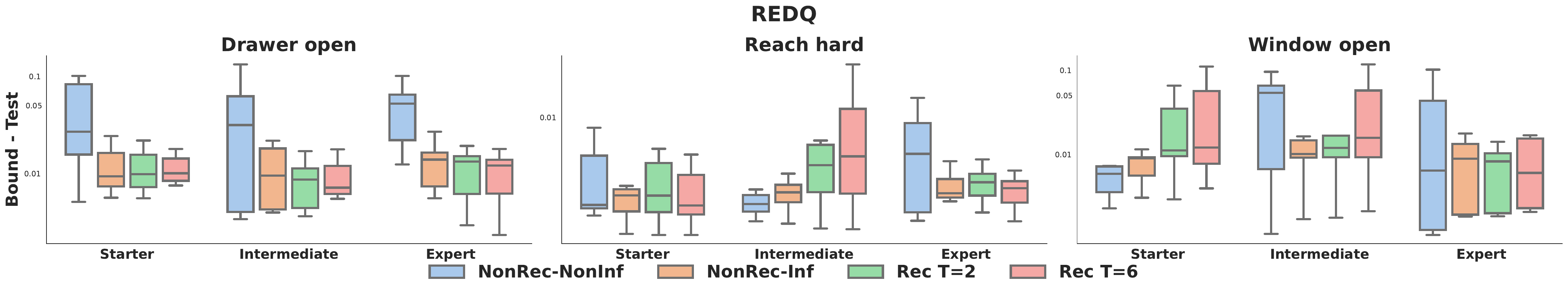}    
    \caption{
    \emph{(Bound - test) values.} Tightness of the bound over all baselines and policy levels considered for REDQ across three Meta-world environments. The plots aggregated over policy instances and their repetitions. 
    }
    \label{fig:difference_mw_redq}
\end{figure}
\begin{figure}
    \centering
    \includegraphics[width=0.95\textwidth]{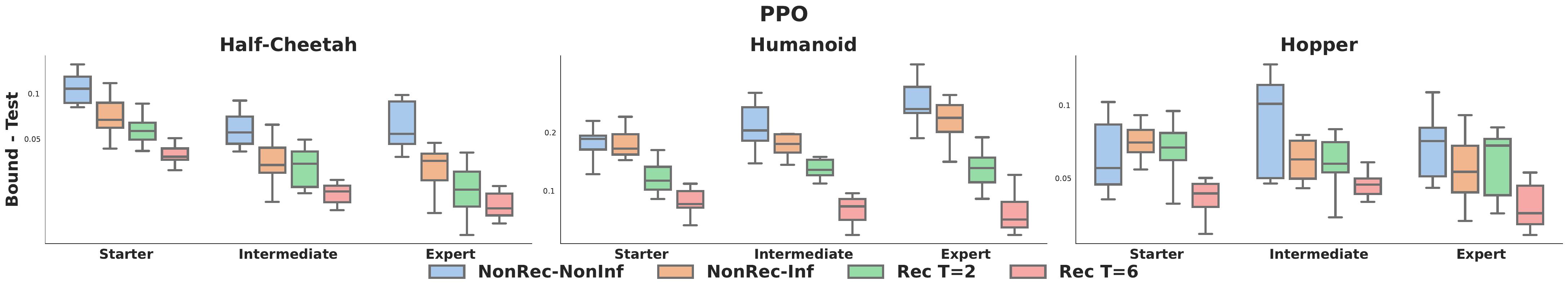}    
    \caption{
    \emph{(Bound - test) values.} Tightness of the bound over all baselines and policy levels considered for PPO across three MuJoCo environments. The plots aggregated over policy instances and their repetitions. 
    }
    \label{fig:difference_mujoco_ppo}
\end{figure}
\begin{figure}
    \centering
    \includegraphics[width=0.95\textwidth]{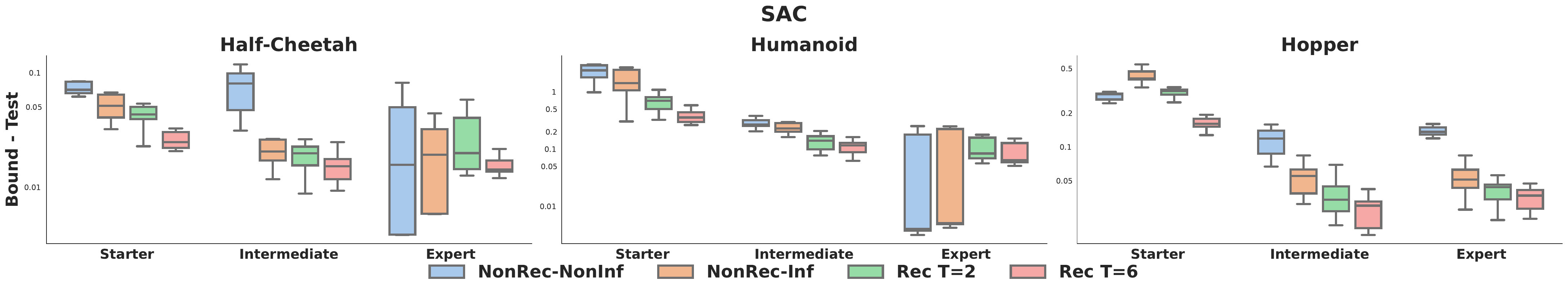}    
    \caption{
    \emph{(Bound - test) values.} Tightness of the bound over all baselines and policy levels considered for SAC across three MuJoCo environments. The plots aggregated over policy instances and their repetitions. 
    }
    \label{fig:difference_mujoco_sac}
\end{figure}

\subsection{Validation data size} 
\Cref{app:pearson_correlation} examines the Pearson correlation between normalized bound scores and normalized test errors across policy levels and varying validation data sizes for the Humanoid environment—our most challenging setting due to its high-dimensional state and action spaces. Each heatmap displays a point estimate of these correlations for all baselines across different validation set sizes for a specific policy, aggregated over policy instances and repetitions. Although correlations are generally weaker and more variable under the Starter policy, they tend to improve with stronger policies and larger validation sets for intermediate- and expert-level policies. Recursive bounds, especially those with greater recursion depth, consistently show stronger positive correlations. These findings emphasize that both policy strength and validation data size affect the bound’s ability to predict generalization, with recursion providing the most reliable alignment between bounds and test errors.

\begin{figure}
    \centering
\includegraphics[width=1\textwidth]{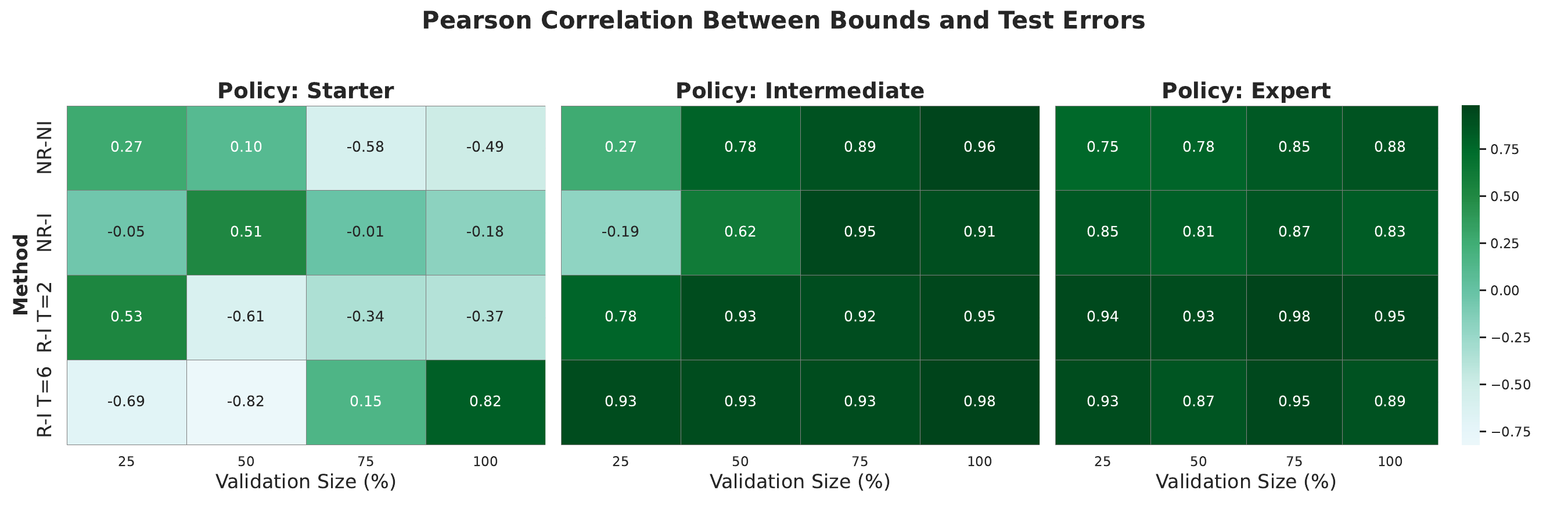}
    \caption{\emph{Pearson correlation.} Correlation between bound scores and test error considering five policy instances and five repetitions for each, across various validation sizes and policy levels.
    }
\label{app:pearson_correlation}
\end{figure}

\newpage
\subsection{Raw numbers}
We include the raw individual values averaged over five repetitions in all scenarios.

\begin{table}
\centering
\caption{Normalized train error, test error, and bound for REDQ on MuJoCo's walker2d}
\begin{subtable}{\textwidth}
\centering
\caption{Starter policy}
\adjustbox{max width=0.9\textwidth}{
% [inline block 0: 51 envs, 54778 chars -> data_tex | \begin{tabular}{lccccccccccccccc} \toprule...]
}
\vspace{0.5em}
\end{subtable}
\end{table}

\end{document}